\def\eqref#1{equation~\ref{#1}}
\def\1{\bm{1}}
\DeclareMathAlphabet{\mathsfit}{\encodingdefault}{\sfdefault}{m}{sl}
\SetMathAlphabet{\mathsfit}{bold}{\encodingdefault}{\sfdefault}{bx}{n}
\newtheorem{theorem}{Theorem}
\newtheorem{proposition}[theorem]{Proposition}
\theoremstyle{definition}
\definecolor{my_green}{RGB}{51,102,0}
\definecolor{my_purple}{RGB}{160, 43, 147}
\definecolor{myblue}{RGB}{15, 158, 213}
\definecolor{rowblue}{RGB}{220,230,245}
\definecolor{lightgray}{gray}{0.95}
\definecolor{mintblue}{RGB}{210,235,250}
\newcommand{\innerrow}{\rowcolor{myblue!10}}
\newcommand{\normalrow}{\rowcolor{lightgray!70}}
\definecolor{deltaBg}{RGB}{173,216,230} 
\newcommand{\rowhighlight}{\rowcolor{myblue!5}}
\definecolor{mintblue}{RGB}{210,235,250}
\definecolor{mintframe}{RGB}{120,180,220} 
\definecolor{minttitle}{RGB}{100,150,200} 
\definecolor{minttext}{RGB}{50,80,120}    
\definecolor{runzhemilk}{RGB}{255,235,245} 
\definecolor{roseframe}{RGB}{230,120,150}  
\definecolor{runzhecotton}{RGB}{255,170,200}    
\newtcolorbox{promptbox}[1]{
  enhanced,
  breakable,
  colback= runzhemilk!30!white,   
  colframe=roseframe,                
  colbacktitle= runzhecotton!66!white, 
  coltitle=white!33,
  title=\textbf{#1},
  fonttitle=\bfseries,
  sharp corners=south, 
  borderline={0.8pt}{0pt}{roseframe},
  boxrule=0.8pt,
  arc=6pt, 
  left=6pt, right=6pt, top=6pt, bottom=6pt,
  before skip=10pt, after skip=10pt,
  drop shadow=black!12,      
}
\newtcolorbox{casebox}[1]{
enhanced,
breakable,
colback=mintblue!40!white,
colframe=mintframe,
colbacktitle=minttitle!70!white,
coltitle=white,
title=\textbf{#1},
fonttitle=\bfseries,
sharp corners=south, 
borderline={0.8pt}{0pt}{minttitle},
boxrule=0.8pt,
arc=6pt, 
left=6pt, right=6pt, top=6pt, bottom=6pt,
before skip=10pt, after skip=10pt,
drop shadow=black!15, 
}
\newtcolorbox{takeawaysbox}{
enhanced,
breakable,
colback=mintblue!40!white,
colframe=mintframe,
colbacktitle=minttitle!70!white,
coltitle=white,
title=\textbf{Key Takeaways},
fonttitle=\bfseries,
sharp corners=south, 
borderline={0.8pt}{0pt}{minttitle},
boxrule=0.8pt,
arc=6pt, 
left=6pt, right=6pt, top=6pt, bottom=6pt,
before skip=10pt, after skip=10pt,
drop shadow=black!15, 
}
\newcommand{\TrajDetail}[2]{
  \begin{center}
    \tcbox[
    colback=green!5!white, 
    colframe=green!39!gray!35, 
    coltext=black!85,
    boxrule=0.8pt,
    left=3pt,right=3pt,top=3pt,bottom=3pt,
    rounded corners, 
  ]{
    \textsc{Reward}: \underline{1.0}\quad
    \textsc{Entropy}: \underline{#1}\quad
    \textsc{CoT Validity (Judged by Qwen3-32B)}: \underline{#2}
  }
  \end{center}
}
\newcommand{\DrawLine}{%
  \begin{tikzpicture}
  \path[use as bounding box] (0,0) -- (\linewidth,0);
  \draw[color=minttitle!70!white,dashed,dash phase=1.5pt]
        (0-\kvtcb@leftlower-\kvtcb@boxsep,0)--
        (\linewidth+\kvtcb@rightlower+\kvtcb@boxsep,0);
  \end{tikzpicture}%
  }
\definecolor{darkblue}{rgb}{0, 0, 0.5}
\NewDocumentCommand{\runzhe}
{ mO{} }{\textcolor{blue}{\textsuperscript{\textit{runzhe's note:}}\textsf{\textbf{\tiny[#1]}}}}
\NewDocumentCommand{\yafu}
{ mO{} }{\textcolor{red}{\textsuperscript{\textit{yafu}}\textsf{\textbf{\small[#1]}}}}
\NewDocumentCommand{\zw}
{ mO{} }{\textcolor{blue}{\textsuperscript{\textit{zw}}\textsf{\textbf{\small[#1]}}}}
\def\@fnsymbol#1{%
  \ifcase#1 \or
    * 
  \or
    \textrm{\Letter} 
  \or
    \ddagger 
  \or
    \S 
  \or
    \P 
  \or
    \| 
  \or
    ** 
  \or
    \dagger\dagger 
  \or
    \ddagger\ddagger 
  \else
    \@ctrerr
  \fi
}
\title{ExGRPO: Learning to Reason from Experience}
\author{Runzhe Zhan$^{12}$\thanks{~Work was done during Runzhe Zhan's internship at Shanghai AI Laboratory.} \quad~
Yafu Li$^{24}$\footnotemark[2] \quad~
Zhi Wang$^{32}$ \quad
Xiaoye Qu$^{2}$ \quad
Dongrui Liu$^{2}$ \quad 
\textbf{Jing Shao}$^{2}$ \quad\\
\textbf{Derek F. Wong}$^{1}$\thanks{~Corresponding authors.} \quad~
\textbf{Yu Cheng}$^{4}$
\\
$^{1}$University of Macau \quad
$^{2}$Shanghai AI Laboratory \quad 
$^{3}$Nanjing University \quad \\
$^{4}$The Chinese University of Hong Kong 
\\
\texttt{nlp2ct.runzhe@gmail.com} \quad \texttt{yafuly@gmail.com} \quad \texttt{zhiwang@nju.edu.cn} \\
\texttt{\{quxiaoye, liudongrui, shaojing\}@pjlab.org.cn}\\
\texttt{derekfw@um.edu.mo} \quad \texttt{chengyu@cse.cuhk.edu.hk}
}
\newcommand{\github}{\raisebox{-1.5pt}{\includegraphics[height=1.05em]{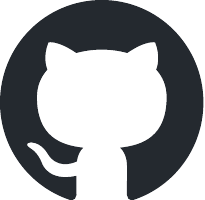}}}
\newcommand{\huggingface}{\raisebox{-1.5pt}{\includegraphics[height=1.05em]{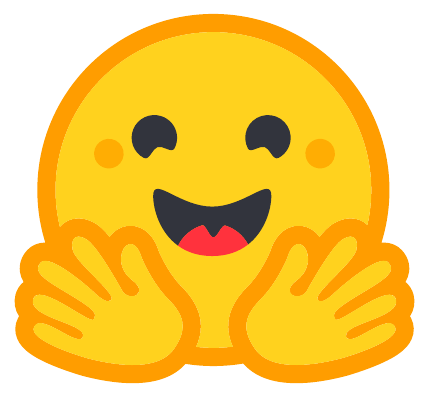}}}
\begin{document}

\maketitle

\begin{abstract}
Reinforcement learning from verifiable rewards (RLVR) is an emerging paradigm for improving the reasoning ability of large language models. 
However, standard on-policy training discards rollout experiences after a single update, leading to computational inefficiency and instability.
While prior work on RL has highlighted the benefits of reusing past experience, the role of experience characteristics in shaping learning dynamics of large reasoning models remains underexplored.
In this paper, we are the first to investigate what makes a reasoning experience valuable and identify rollout correctness and entropy as effective indicators of experience value. 
Based on these insights, we propose \textbf{ExGRPO} (\textbf{Ex}periential \textbf{G}roup \textbf{R}elative \textbf{P}olicy \textbf{O}ptimization), a framework that organizes and prioritizes valuable experiences, and employs a mixed-policy objective to balance exploration with experience exploitation. Experiments on five backbone models (1.5B-8B parameters) show that ExGRPO consistently improves reasoning performance on mathematical/general benchmarks, with an average gain of +3.5/7.6 points over on-policy RLVR. 
Moreover, ExGRPO stabilizes training on both stronger and weaker models where on-policy methods fail. 
These results highlight principled experience management as a key ingredient for efficient and scalable RLVR.
\begin{center}
Code:~\github~\href{https://github.com/ElliottYan/LUFFY/tree/main/ExGRPO}{{\text{ExGRPO}}} \quad \quad \quad Models:~\huggingface~\href{https://huggingface.co/collections/rzzhan/exgrpo-68d8e302efdfe325187d5c96}{{\text{ExGRPO Models}}}
\end{center}
\end{abstract}

\section{Introduction}
Reinforcement learning (RL) has become a pivotal technique for advancing the reasoning capabilities of language models on complex tasks~\citep{r1,yang2025qwen3}. 
RL augments language models' reasoning by modeling chain-of-thought (CoT) as action sequences optimized under verifiable rewards (RLVR), laying the groundwork for sophisticated downstream applications~\citep{gridach2025agentic}.
However, a significant yet overlooked challenge persists: due to the on-policy nature of most RLVR algorithms, the valuable experience generated during the rollout phase is often discarded after a single gradient update~\citep{grpo}. 
This practice not only squanders substantial computational resources but also forfeits critical opportunities for the model to learn from prior successful explorations, thereby imposing a bottleneck on scaling RL for reasoning.

Experience replay~\citep{Lin1992} is a widely adopted technique in RL to address this issue and improve sample efficiency. 
As stated in \cite{silver2025welcome}, AI is at the cusp of a new period in which experience will become the dominant medium of improvement.
This idea, often referred to as experience-based RL, leverages the intuition that previously collected {interactions (i.e., state-action-reward tuples)} contain valuable information for learning, helping models stabilize training and accelerate convergence~\citep{Mnih2013}. 
{A non-trivial challenge lies in how to exploit past experiences based on their differing ``values"~\citep{Schaul2015} and manage the replay process according to customized learning schedules~\citep{sujit2023prioritizing}.}
{However, efficient experience replay mechanisms remain largely underexplored in RLVR for building large reasoning models (LRMs; ~\citealt{plaat2024reasoning}).}
Given the vast quantity of experiences collected during rollouts, a fundamental question remains: 
{\textit{How can the reasoning model effectively exploit its own stream of experience to maximize learning toward scaling RL compute for LRMs?}}

To address this gap, we begin by examining what constitutes a valuable reasoning experience for RLVR optimization\footnote{{In the context of RLVR, the experience refers to a state-action-reward trajectory during rollout of the reasoning chain. We will use the terms experience/trajectory/rollout interchangeably throughout the paper.}}.
We hypothesize that experience utility varies with measurable properties.
RLVR experience generally consists of a question and its corresponding trajectories. 
Accordingly, we study experience properties from these two components. 
Through systematic analysis, we identify rollout correctness (for questions) and trajectory entropy (for trajectories) as effective online proxy metrics for characterizing experience quality. 
Specifically, tasks of intermediate difficulty and their associated low-entropy trajectories tend to be beneficial for RLVR optimization.

Building on these insights, we introduce \underline{\textbf{Ex}}periential \underline{\textbf{G}}roup \underline{\textbf{R}}elative \underline{\textbf{P}}olicy \underline{\textbf{O}}ptimization (ExGRPO), a novel framework designed to strategically identify, manage, and replay valuable experiences. 
ExGRPO maintains a replay buffer of reasoning trajectories derived from partially correct rollouts and organizes them into buckets according to their correctness levels.
To manage the buffer effectively, it uses a sampling strategy that prioritizes experiences from the most beneficial buckets, along with the corresponding trajectory with the lowest entropy. 
This approach allows the model to learn more efficiently from past experiences that align best with its current capabilities, guided by the principles of valuable experience concluded in our preliminary analysis. 
During mini-batch optimization, ExGRPO uses a mixed-policy optimization objective that balances between leveraging fresh exploration and reusing strategically selected past experiences, improving both sample efficiency and training stability. 

Our experimental results demonstrate that ExGRPO delivers improvements over on-policy RLVR baselines. We evaluate across five backbone models, spanning the Qwen~\citep{yang2024qwen25math,qwen2025qwen25technicalreport} and Llama~\citep{grattafiori2024llama} families from 1.5B to 8B parameters, on both mathematical reasoning benchmarks (AIME24/25, AMC, OlympiadBench, Minerva, MATH500) and out-of-distribution reasoning benchmarks (ARC-c, GPQA, MMLU-Pro). 
Averaged over \textbf{all backbone models}, ExGRPO achieves +3.5 and +7.6 point gains\footnote{Due to space limitations, \textit{numerical results} of some backbone models are reported in \cref{sec:model-ext-nums}.} on in-distribution and out-of-distribution benchmark performance, respectively, compared to the on-policy RLVR.
Notably, ExGRPO stabilizes RLVR training on the weaker Llama-3.1 8B model~\citep{grattafiori2024llama} and continual learning on the stronger LUFFY model~\citep{yan2025learning}, where on-policy optimization collapses.
Detailed analysis and ablations further confirm that improvements stem from ExGRPO’s experience management and optimization mechanisms, which amplify the utility of past explorations.

\vspace{-5px}
\section{Related Work}
\vspace{-5px}
\paragraph{Reinforcement Learning with Verifiable Rewards.} 
Reinforcement Learning with Verifiable Rewards (RLVR; \citealt{lambert2024tulu}) frames the language model as a policy that generates reasoning trajectories for verifiable tasks, with rewards provided by rule-based~\citep{r1,yu2025dapo} or model-based verifiers~\citep{su2025crossing,ma2025general,chen2025xverify}. 
Most RLVR methods adopt on-policy optimization~\citep{schulman2017proximal, grpo}, which ensures stability but incurs high computational cost.
Recent work has explored off-policy techniques~\citep{kallus2020statistically,meng2023off} that incorporate historical or external data. Some approaches mix expert demonstrations with on-policy updates, e.g., off-policy policy gradients~\citep{yan2025learning}, SFT loss~\citep{zhang2025policy,ma2025learning}, or knowledge distillation~\citep{xu2025kdrl}; others develop direct off-policy update rules for improved sample efficiency~\citep{cohen2025soft, roux2025tapered,arnal2025asymmetric}.
However, they overlook the impact of data quality within the experience replay buffer, a factor that remains underexplored in RLVR and is the central focus of our work.

\vspace{-10px}
\paragraph{Experience-based Reinforcement Learning.}
Experience replay~\citep{Lin1992} is a classical RL technique to improve sample efficiency and stabilize training, later extended with prioritized replay~\citep{Schaul2015,sujit2023prioritizing} to emphasize informative transitions, enabling breakthroughs in control tasks~\citep{Mnih2013,Mnih2015,Lillicrap2016}. 
For LRMs, recent studies show that replaying successful trajectories accelerates convergence and improves reasoning capabilities. 
ReMix~\citep{liang2025squeeze} leverages off-policy experience replay to improve training efficiency; 
RePO adopts online replay~\citep{li2025repo}, collecting early on-policy rollouts and revisiting them asynchronously; 
RLEP~\citep{zhang2025rlep} reuses trajectories from a well-trained policy; 
and RRL~\citep{dou2025improving} dynamically revisits promising early states; 
ARPO~\citep{lu2025arpo} extends this idea to GUI agents, combining GRPO with a replay buffer to enhance stability and sample efficiency. 
Among them, the last three methods overlook importance weighting to correct distribution mismatch in off-policy updates.
In this work, we argue that not all stored experiences are equally valuable in zero RLVR for LRMs. 
We systematically analyze properties that determine the value of past experiences and design methods to better leverage high-value experiences, thereby improving efficiency and performance.

\section{Preliminaries}
\label{sec:preliminary}
In this section, we briefly review the RL foundations underlying our method: RLVR and Group Relative Policy Optimization (GRPO), upon which ExGRPO is built. 
We then present a preliminary analysis of experience data, i.e., the model’s past successful rollouts, and examine how their characteristics influence performance.

\subsection{Reinforcement Learning with Verifiable Reward}

\paragraph{Verifiable Reward Function.} 
The verifiable reward compares the extracted answer from the model’s output with a predefined golden answer. 
For example, the model is instructed to output the final answer in a format such as \verb|\boxed{}|, from which a verifier extracts the value. 
Formally, given a model output $o$ for question $q$, the reward is: 
\begin{equation}
    r(q,o) = 
    \begin{cases}
        1 & \text{if } o \text{ contains the correct final answer to } q \\
        0 & \text{otherwise.}
    \end{cases}
\end{equation}
\paragraph{Group Relative Policy Optimization (GRPO).} 
GRPO~\citep{grpo} is a strong baseline within the RLVR paradigm~\citep{r1,simplerl-zoo,drgrpo}, achieving effective scaling without requiring an additional value model. 
It estimates the advantage of each trajectory by normalizing rewards within a group of $N$ sampled solutions. 
Given a group of $K$ trajectories $\mathcal{G}_q=\{o_i\}_{i=1}^K$ using the current reference policy $\pi_{\theta_{\text{old}}}$ (i.e, rollout policy), GRPO estimates the advantage of each trajectory $o_i$ by normalizing its reward against the empirical statistics of the group:
\begin{equation}
\widehat{A}_{i} = \frac{r(q, o_i) - \mu_{\mathcal{G}_q}}{\sigma_{\mathcal{G}_q}} 
\label{eq:grpo_adv}
\end{equation}
where $\mu_{\mathcal{G}_q} = \frac{1}{K}\sum_{j=1}^K r(q, o_j)$ and $\sigma_{\mathcal{G}_q}$ are the mean and standard deviation of rewards in the group $\mathcal{G}_q$. 
The on-policy objective maximizes a clipped surrogate function:
\begin{equation}
\mathcal{J}_{\text{GRPO}}(\theta) = \mathbb{E}_{q \sim \mathcal{D}, \{o_i\} \sim \pi_{\theta_{\text{old}}}(\cdot|q)} \left[ \frac{1}{K} \sum_{i=1}^{K} {\text{CLIP}}(w_i(\theta), \widehat{A}_i) \right]
\label{eq:grpo_loss}
\end{equation}
where $\mathcal{D}$ is the training query set and the loss term is ${\text{CLIP}}(w, A) = \frac{1}{|o|} \sum_{t=1}^{|o|} \min \left( w_{t} A, \mathrm{clip}(w_{t}, 1-\varepsilon, 1+\varepsilon) A \right)$. Since the trajectory $o$ is generated by the rollout policy model before update, i.e., $\pi_{\theta_\text{old}}$, the per-token importance weight $w_{i,t}(\theta)$ is the probability ratio between the current policy $\pi_{\theta}$ and reference policy $\pi_{\theta_{\text{old}}}$: $w_{i,t}(\theta)={\pi_{\theta}(o_{i,t}|q, o_{i,<t})} / {\pi_{\theta_{\text{old}}}(o_{i,t}|q, o_{i,<t})}$.
Following Dr.GRPO~\citep{drgrpo}, we remove the length normalization and standard deviation normalization of GRPO loss (Eqs.~\ref{eq:grpo_adv} and ~\ref{eq:grpo_loss}).

\subsection{Preliminary Study on Experience Data}
\label{sec:retionale_exp_management}
To motivate our experience management design, identifying which questions and trajectories are most valuable for experience-based learning, we begin by analyzing rollouts from on-policy RLVR.
Our study is guided by two questions:  
(1) Are all questions equally useful for training, or do certain difficulty levels provide stronger learning signals? and  
(2) Does low entropy correlate with higher-quality trajectories that the model should prioritize?

\paragraph{Setup.}
We conduct experiments using Qwen2.5-Math 7B~\citep{yang2024qwen25math} backbone model, trained with vanilla on-policy RLVR following the Dr.GRPO setup on the OpenR1-Math dataset~\citep{face2025open}. 
Implementation details and hyperparameters are provided in \cref{sec:exp_setup}.  
The experimental setting is as follows: we train three models, each using training batches restricted to questions of a particular difficulty level, determined by rollout correctness of each question. 
During training, we collect the generated trajectories and analyze their characteristics, such as entropy and the quality (validity) of the underlying reasoning chains.
The entropy refers to the average action entropy: $
H(o) = - \tfrac{1}{|o|} \sum_{t} \pi(o_t \mid q,o_{<t}) \log \pi(o_t \mid q,o_{<t})$, indicating the uncertainty of the model generating each action (token). 
Under this setting, three models are trained with a comparable number of samples, and detailed statistics are provided in Appendix~\ref{sec:maskgrpo}.
\begin{figure}[htbp]
    \centering
    \begin{subfigure}[b]{0.325\textwidth}
        \includegraphics[width=\textwidth]{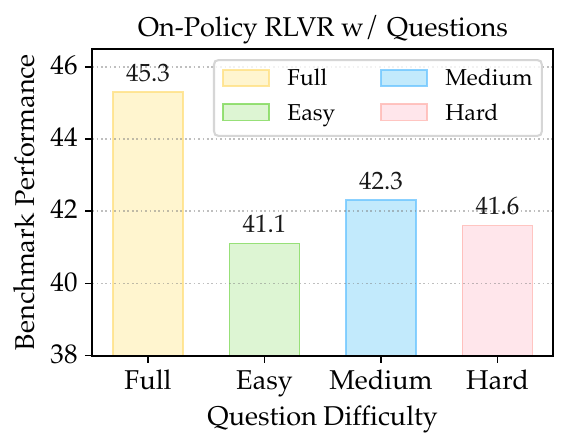} 
        \caption{Test Performance.}
        \label{fig:pre-performance}
    \end{subfigure}
    \hfill
    \begin{subfigure}[b]{0.325\textwidth}
        \includegraphics[width=\textwidth]{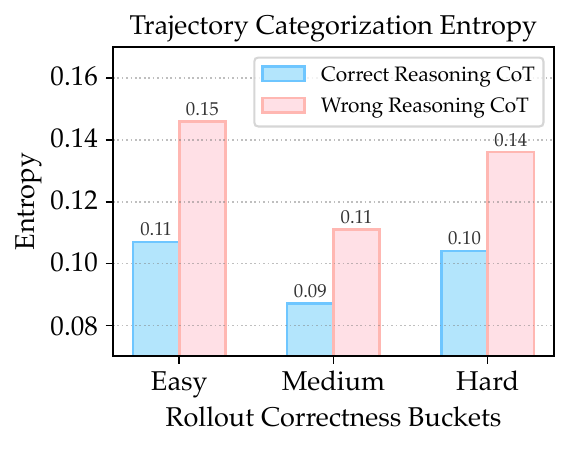} 
        \caption{Reasoning Chain Validity.}
        \label{fig:pre-ent}
    \end{subfigure}
    \hfill
    \begin{subfigure}[b]{0.325\textwidth}  
        \includegraphics[width=\textwidth]{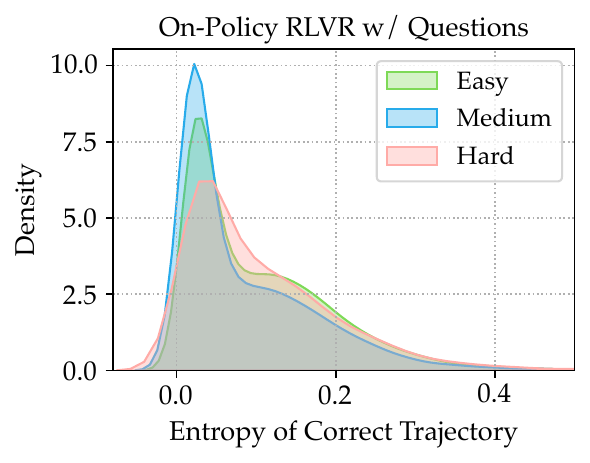} 
        \caption{Entropy Distribution.}
        \label{fig:pre_buckets_entropy}
    \end{subfigure}
    \hfill 
    \caption{Analysis of question difficulty and entropy in on-policy RLVR training: 
    (a) Test performance of models trained on different question groups;  
    (b) Entropy comparison between correct and incorrect trajectories;
    (c) Entropy distributions of logically correct trajectories across question groups; 
    }
    \label{fig:pre-all}
\end{figure}

\paragraph{Are all questions equally useful for training?}
During training, we categorize each question $q$ into one of three difficulty buckets based on its online (rollout) correctness rate: 
\textit{Easy} $[75\%, 100\%)$, \textit{Medium} $(25\%, 75\%]$, and \textit{Hard} $(0, 25\%]$.  
This classification is computed online without additional decoding or offline annotation. 
We implement bucket-specific experiments by restricting each batch to questions from a single difficulty group, producing three models, along with one trained on the full set.
Results in Figure~\ref{fig:pre-performance} show that training on Medium questions yields the largest performance gains. 
In contrast, models trained exclusively on other questions perform substantially worse than the full-data baseline, yet they still provide complementary signals and should not be entirely discarded.

\paragraph{Does lower entropy imply better reasoning?}
We then test which metric can serve as an online proxy for reasoning quality.
While outcome-based rewards check the final answer, they do not capture whether the reasoning CoT is logically correct or valid.  
To establish a reference, we follow previous work~\citep{wen2025reinforcement} to employ an LRM (Qwen3-32B; \citealt{yang2025qwen3}) as an external judge, which inspects correct outputs and labels their reasoning validity (See \cref{sec:cot_judge}).
As shown in \cref{fig:pre-ent}, correct reasoning trajectories exhibit lower entropy than incorrect ones, indicating that selecting the lowest-entropy candidate generally yields higher CoT quality. 
We further examine the entropy distribution of correct trajectories across the three question groups in Figure~\ref{fig:pre_buckets_entropy}. Correct trajectories in the Medium group concentrate at lower entropy values, while Easy and Hard groups display broader distributions, with Hard problems peaking at higher entropy. 
This observation partly explains the marginal advantage of training on Medium-level questions compared to the other groups.

In summary, we highlight two guidelines: \textit{Medium-difficulty \textbf{questions} provide the most valuable optimization signals}, and \textit{entropy minimization is an effective heuristic for \textbf{trajectory} selection}.

\paragraph{Low-entropy trajectories mitigate misguided learning.}
{Entropy magnitude \citep{seedgrpo} is a key signal in RLVR training. Prior work shows that aggressively minimizing entropy can cause collapse~\citep{cui2025entropy}, motivating a more balanced exploration–exploitation strategy in experience management.} While high-entropy policies encourage exploration, they can yield low-quality trajectories, many successes are merely lucky hits from incorrect reasoning chains.

Repeatedly sampling such misguided experiences from the replay buffer contaminates training, leading to systematic reasoning errors, we refer to a \textit{snowball effect}. 
For instance, as illustrated in \cref{sec:snowball}, we observe models using invalid code blocks for math problems, traced to high-entropy trajectories in the buffer.
Thus, entropy minimization should be a guiding principle for experience selection, especially early in training when stability matters most.
However, balancing entropy reduction with exploration remains a non-trivial challenge, which we will detail in the methodology.

\section{Methodology}
\begin{figure}[t]
    \centering
    \includegraphics[width=0.98\textwidth]{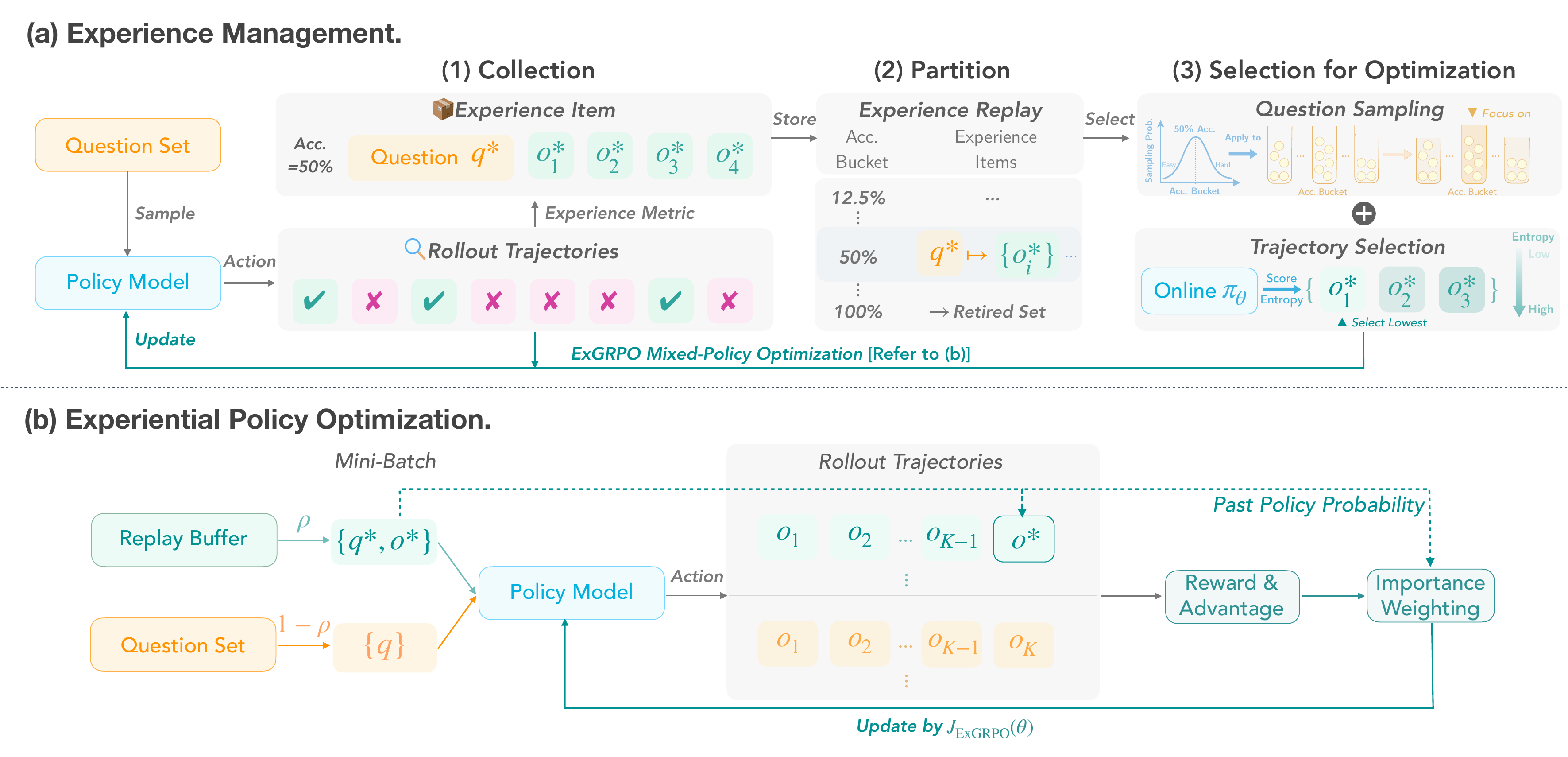}
    \caption{Overview of \underline{\textbf{Ex}}periential \underline{\textbf{G}}roup \underline{\textbf{R}}elative \underline{\textbf{P}}olicy \underline{\textbf{O}}ptimization (ExGRPO). ExGRPO operates in two phases: (a)~Experience Management and (b)~Policy Optimization (cf. \cref{alg:bucket-lowH}).}
    \label{fig:head-figure}
\end{figure}
Based on previous findings, we propose Experiential Group Relative Policy Optimization (ExGRPO) as shown in \cref{fig:head-figure}, a method that leverages structured experience replay with entropy-based trajectory selection to enhance sample efficiency while preserving policy stability. 

\vspace{-5px}
\subsection{ExGRPO: Experience Management}
\paragraph{Problem Formulation.} Given a dataset $\mathcal{D}$ and a reward model $r(\cdot)$, ExGRPO iteratively updates a policy model $\pi_\theta$ by combining on-policy rollouts $\mathbb{E}_{q \sim D, o \sim \pi_{\theta_{\text{old}}}(\cdot\mid q)}\bigl[r(q,o)\bigr]$ and experience replay $\mathbb{E}_{q^* \sim \mathcal{E}, o^* \sim \pi_{\theta_{\text{past}}}(\cdot\mid q^*)}\bigl[r(q^*,o^*)\bigr]$. 
The $\pi_{\theta_{\text{old}}}$ is the rollout policy model before gradient updates, while $\pi_{\theta_{\text{past}}}$ is the previous one which generates the corresponding experiential trajectories. 
The buffer $\mathcal{E}$ is maintained as a key-value structure $q^*\mapsto\{o^*\}$, where each question is associated with a set of candidate trajectories. We next describe the three-stage experience management pipeline.

\vspace{-5px}
\paragraph{Experience Collection.}
During training, we collect the model’s successful trajectories for each question in the batch. 
For a given question $q$ sampled from the dataset, the rollout policy $\pi_{\theta_{\text{old}}}$ generates $K$ trajectories, denoted as $\mathcal{G}_q^*=\{o^*_i\}_{i=1}^K$. 
We then compute the correctness rate $\mathrm{Acc}(q^*) = k/K$, where $k$ is the number of successful trajectories verified by the reward model. 
All successful rollouts are stored in the replay buffer $\mathcal{E}$, recorded as a mapping $q^* \mapsto \{o^*\}$. 
The associated correctness rate is saved together with the query and later used for experience partition.

\vspace{-5px}
\paragraph{Experience Partition.}
The replay buffer $\mathcal{E}$ is partitioned into buckets by each question's \textbf{latest} correctness rate $\mathrm{Acc}(q^*)$, with $q^*$ as the indexing key. 
{Prior work \citep{grpolead} reweights advantages based on correctness rates, whereas we leverage them to bucket experiences by difficulty.}
To prevent overfitting on trivial cases, we introduce a \textit{Retired Set}: questions solved in all rollouts are removed from $\mathcal{E}$, ensuring optimization focuses on partially solved questions. 
{\citet{reinforce-rej} also filters prompts by online correctness. In contrast, our retired-set strategy excludes mastered experience items.}

\vspace{-5px}
\paragraph{Experience Selection.}
During optimization, ExGRPO retrieves a mini-batch of experiential samples from partitioned buffer $\mathcal{E}$ in two steps:

\textit{(1) Question Sampling.} 
To prioritize experiences described in Section~\ref{sec:retionale_exp_management}, each bucket is sampled with probability $p \propto \mathcal{N}(\mathrm{Acc}(q^*);~\mu,~\sigma=1)$, where $\mu$ is set to $0.5$ in practice.
This ensures that medium-difficulty questions are sampled more often than trivially easy or mostly failed ones, allowing biases sampling towards the ``sweet spot'' questions.

\noindent \textit{(2) Trajectory Selection.} For each sampled question, we select the trajectory with the lowest entropy {\textbf{under the current policy} without relying on an external judge}, i.e.,
$o^* \leftarrow \arg\min_{o_i \in \{o^*\}} H(o_i;\pi_\theta)$, thereby prioritizing trajectories which indicate better CoT quality as discussed in Section~\ref{sec:retionale_exp_management}.

After three-stage \textit{management}, selected experiences are used for mixed-policy \textit{optimization}.

\subsection{ExGRPO: Experiential Policy Optimization}
\label{sec:exgrpo-optimize}
The core idea of ExGRPO optimization is to unify on-policy exploration with off-policy replay under a joint objective, while correcting the distribution shift.
At each step, a mini-batch $\mathcal{B}$ is constructed from: (1) \textit{on-policy samples} $\mathcal{B}_{\text{on}}$ from dataset $\mathcal{D}$, and (2) \textit{experiential samples} $\mathcal{B}_{\text{exp}}$ from buffer $\mathcal{E}$. A hyperparameter $\rho \in [0,1)$ determines the experiential proportion, with $|\mathcal{B}_{\text{exp}}|=\min(\lfloor \rho B \rfloor, |\mathcal{E}|)$ and $|\mathcal{B}_{\text{on}}|=B-|\mathcal{B}_{\text{exp}}|$.
This ensures a lower bound in cases where the experience replay buffer is depleted.
The overall ExGRPO objective is a combination of two components:

The \textit{on-policy objective} $\mathcal{J}_{\text{on}}(\theta)$ follows GRPO in Eq.~\ref{eq:grpo_loss}: for each query $q \sim \mathcal{B}_{\text{on}}$, $K$ trajectories $\{o_i\}_{i=1}^{K}$ are sampled from policy $\pi_{\theta_\text{old}}$ to form {an} advantage group $\mathcal{G}_q$, with objective $\mathcal{J}_{\text{GRPO}}(q,\mathcal{G}_q;\theta)$.

For the \textit{experiential off-policy objective}, given $q^* \sim \mathcal{B}_{\text{exp}}$, we form a mixed advantage estimation group $\mathcal{G}_{q^*}=\{o^*\} \cup \{o_i\}_{i=1}^{K-1}$, where $o^*$ is a replayed trajectory from past policy $\pi_{\theta_{\text{past}}}$ and $\{o_i\}$ are new rollouts from current reference policy $\pi_{\theta_\text{old}}$. To obtain an unbiased policy gradient estimate from the replayed trajectory, $o^*$ is reweighted by $w^*_t(\theta)=\frac{\pi_\theta(o^*_t \mid q^*,o^*_{<t})}{\pi_{\theta_{\text{past}}}(o^*_t \mid q^*,o^*_{<t})}$.
All trajectories in $\mathcal{G}_{q^*}$ use GRPO advantage estimates $\widehat{A}$ in Eq.~\ref{eq:grpo_adv}. Finally, the unified objective is:
\begin{equation}
\label{eq:exgrpo_final_corrected_labeled}
\begin{aligned}
\mathcal{J}_{\text{ExGRPO}}(\theta) = \ & \underbrace{ (1-\rho) \cdot \mathbb{E}_{q \sim \mathcal{B}_{\text{on}}} \left[ \frac{1}{K} \sum_{i=1}^{K} \text{CLIP}(w_i(\theta), \widehat{A}(o_i, \mathcal{G}_q)) \right] }_{\text{Expansion of On-policy objective}~\mathcal{J}_{\text{GRPO}}(q,\mathcal{G}_q;\theta)} \\
& + \underbrace{ \rho \cdot \mathbb{E}_{q^* \sim \mathcal{B}_{\text{exp}}} \left[ \frac{1}{K} \left( \text{CLIP}(w^*(\theta), \widehat{A}(o^*, \mathcal{G}_{q^*})) + \sum_{i=1}^{K-1} \text{CLIP}(w_i(\theta), \widehat{A}(o_i, \mathcal{G}_{q^*})) \right) \right] }_{\text{Experiential Off-Policy objective}}
\end{aligned}
\end{equation}

Directly optimizing on replayed trajectories, which are often selected for their low entropy, can hurt the exploration in the RLVR. 
We introduce two mechanisms to ensure stable and effective learning:
\paragraph{Policy Shaping for Entropy Preservation.}
Inspired by \citet{yan2025learning}, we modulate the gradient from experiential data using policy shaping. We replace ``CLIP'' term of importance sampling for $o^*$ in Eq.~\ref{eq:exgrpo_final_corrected_labeled} with $f(w^*(\theta)) \cdot \widehat{A}(o^*, \mathcal{G}_{q^*})$, where $f(x) = \frac{x}{x + \beta}$ is a non-linear transformation with a small constant $\beta = 0.1$. 
This amplifies low-probability signals and dampens high-probability ones within the replayed trajectory, encouraging the model to learn from its more novel aspects from experience.

\paragraph{Delayed Start Mechanism.}
To mitigate the collection of low-quality trajectories during the initial training stages when the model’s capabilities are still developing, we employ a delayed start.
The on-policy RLVR process runs first, and ExGRPO is activated only after the \textit{Pass@1} metric for a training batch surpasses a predefined threshold, ensuring experiential samples are of higher quality.

\section{Experiments}

\subsection{Experimental Setup}
\label{sec:exp_setup}
\paragraph{Data and Evaluation.} 
We train on the OpenR1-Math 45k subset~\citep{face2025open,yan2025learning} and evaluate on nine challenging reasoning benchmarks.
Six are in-distribution math benchmarks: AIME 2024/2025, AMC~\citep{li2024numinamath}, MATH-500~\citep{math500}, Minerva~\citep{minerva}, and OlympiadBench~\citep{olympiadbench}.
To assess generalization, we include three out-of-distribution tasks: ARC-c~\citep{arcc}, GPQA-Diamond (``GPQA$^*$'';~\citealt{gpqa}), and MMLU-Pro~\citep{mmlu_pro}.
For AIME, AMC benchmarks with a limited number of samples, we report \textit{Avg@32} over 32 independent runs; for others, we use \textit{Pass@1} metric, representing the proportion of solved problems on a single attempt.
All evaluations use a sampling \texttt{temperature} of 0.6 and a \texttt{top-p} of 1.0, with answers extracted via the Oat-evaluator toolkit~\citep{drgrpo}.

\definecolor{IDbg}{RGB}{255,244,204}
\definecolor{OODbgBlue}{RGB}{234,242,255} 
\begin{table*}[t]
\centering

\caption{Overall in-distribution and out-of-distribution performance based on {Qwen2.5-Math-7B}. \textbf{Bold} and \underline{underline} indicate the best and second-best results within a comparable group, respectively.}
\label{tab:main}
\setlength{\tabcolsep}{2.0pt}  
\renewcommand{\arraystretch}{1.3} 
\resizebox{\textwidth}{!}{%
\begin{tabular}{lcccccc>{\columncolor{IDbg!36}}c|ccc>{\columncolor{OODbgBlue!66}}c}
\toprule
\multirow{2}{*}{\textbf{Model}} & \multicolumn{7}{c}{\textbf{In-Distribution Performance}} & \multicolumn{4}{c}{\textbf{Out-of-Distribution Performance}} \\
\cmidrule(lr){2-8} \cmidrule(lr){9-12}
 & \textbf{AIME24} &\textbf{AIME25} & \textbf{AMC} & \textbf{MATH-500} & \textbf{Minerva} & \textbf{Olympiad} & \textbf{Avg.} & \textbf{ARC-c} & \textbf{GPQA}$^{*}$ & \textbf{MMLU-Pro} & \textbf{Avg.} \\
 \midrule
Qwen-Base      
  & 11.5 & 4.9    & 31.3 & 43.6 & 7.4  & 15.6 & 19.0      & 18.2 & 11.1 & 16.9 & 15.4     \\
Qwen-Instruct 
  & 12.5 & 10.2   & 48.5 & 80.4 & 32.7 & 41.0 & 37.6      & 70.3 & 24.7 & 34.1 & 43.0     \\
\midrule
\normalrow
\multicolumn{12}{c}{\textit{Previous Zero RLVR Methods}} \\
\midrule
PRIME-Zero 
  & 17.0 & 12.8   & 54.0 & 81.4 & {39.0} & 40.3 & 40.7      & 73.3 & 18.2 & 32.7 & 41.4     \\
Oat-Zero                      
  & {33.4} & 11.9 & 61.2 & 78.0 & 34.6 & 43.4 & 43.7 & 70.1 & 23.7 & 41.7 & 45.2     \\
GPG-Zero & 29.8 & 12.1 & {67.8} & 80.8 & 30.9 & 44.7 & 44.4 & 70.3 & 40.4 & 50.5 & 41.6 \\
RePO-Zero  & 19.8 & 10.2 & 54.0 & 76.8 & 34.2 & 40.1 & 39.2 & 73.8 & 24.2 & 42.5 & 46.8  \\
\midrule
\rowhighlight
\multicolumn{12}{c}{\textit{Zero RLVR with ExGRPO}} \\
\midrule
On-Policy                                  
  & 24.9 & {15.5}   & 59.2 & {84.8} & {38.2} & {49.3} & \underline{45.3} & 82.6 & {37.4} & 49.2 & \underline{56.4} \\
\textbf{ExGRPO}                                 
  & {31.6} & {18.7} & {66.3} & {87.4} & 36.0 & {50.1} & \textbf{48.3} & {84.7} & {37.4} & {52.9} & \textbf{58.3} \\
\midrule
\midrule
\normalrow
\multicolumn{12}{c}{\textit{Off-policy Learning Methods}} \\
\midrule
SFT & 22.2& 22.3 & 52.8 & 82.6 & {40.8} & 43.7 & 44.1 & 75.2 & 24.7 & 42.7 & 47.5     \\
SFT+RL &25.8 & {23.1} &62.7&{87.2}&39.7& 50.4 &48.2&72.4	&24.2&	37.7 &44.8 \\
\midrule
\rowhighlight
\multicolumn{12}{c}{\textit{Continual RLVR with ExGRPO}} \\
\midrule
LUFFY  & 29.4 & 23.1 & 65.6 & 87.6 & 37.5 & 57.2 & 50.1 & 80.5 & 39.9 & 53.0 & 57.8 \\
\midrule
$\hookrightarrow$~Continual LUFFY  & 30.7 & 22.5 & 66.2 & 86.8 & 41.2 & 55.3 & \underline{50.4} & 81.8 & 49.0 & 54.7 & \textbf{61.8} \\
$\hookrightarrow$~On-Policy  & 24.8 & 17.8 & 67.5 & 88.4 & 38.6 & 55.3 & 48.7 & 81.9 & 47.0 & 53.3 & \underline{60.7} \\
$\hookrightarrow$~\textbf{ExGRPO}  & 32.3 & 25.7 & 65.6 & 87.6 & 40.1 & 57.0 & \textbf{51.4} & 83.6 & 42.4 & 54.5 & {60.2} \\
\bottomrule
\end{tabular}%
}
\end{table*}

\paragraph{RLVR Setups.}
We use a rollout batch size of $128$ and an update batch size of $64$. During the rollout stage, we collect $8$ sampled responses for on-policy questions and $7$ for questions drawn from the replay buffer. 
The ratio $\rho$ of experience data in each mini-batch is set to $50\%$.
For most models, the ExGRPO algorithm is activated only after the batch Pass@1 exceeds $35\%$. 
An exception is the Llama model, where this criterion is not applied due to the collapse of on-policy RLVR.
Our experiments were conducted using the \texttt{verl} framework\footnote{{https://github.com/volcengine/verl/}} on either 8$\times$H100, with Math-Verify\footnote{{https://github.com/huggingface/Math-Verify}} as the reward model. 
More detailed training and validation settings are provided in \cref{sec:training-hparams}.


\paragraph{Models and Baselines.} Our primary testbed for zero RLVR is the Qwen2.5-Math 7B model, and we also extend our evaluation to its 1.5B variant and other models, including Llama-3.1 8B (Base and Instruct) and Qwen2.5 7B Instruct models. Details of all baselines can be found in \cref{sec:details_baselines}.



\subsection{Main Results}
\paragraph{ExGRPO surpasses on-policy baselines on complex reasoning benchmarks.}
Our main results in \cref{tab:main} show that ExGRPO consistently outperforms on-policy RLVR baselines across multiple benchmarks. When applied to the Qwen2.5-Math 7B model, ExGRPO delivers a +3.0 point gain over on-policy RLVR on math reasoning tasks. This advantage becomes more pronounced on challenging datasets such as AIME24/25, highlighting its effectiveness on difficult reasoning problems, and also holds on out-of-distribution benchmarks. 
A detailed comparison with the relevant replay-enhanced method in \cref{sec:more-ablations} further shows that ExGRPO outperforms vanilla experiential RL baselines.

\begin{figure}[htbp]
    \centering
    \includegraphics[width=0.99\linewidth]{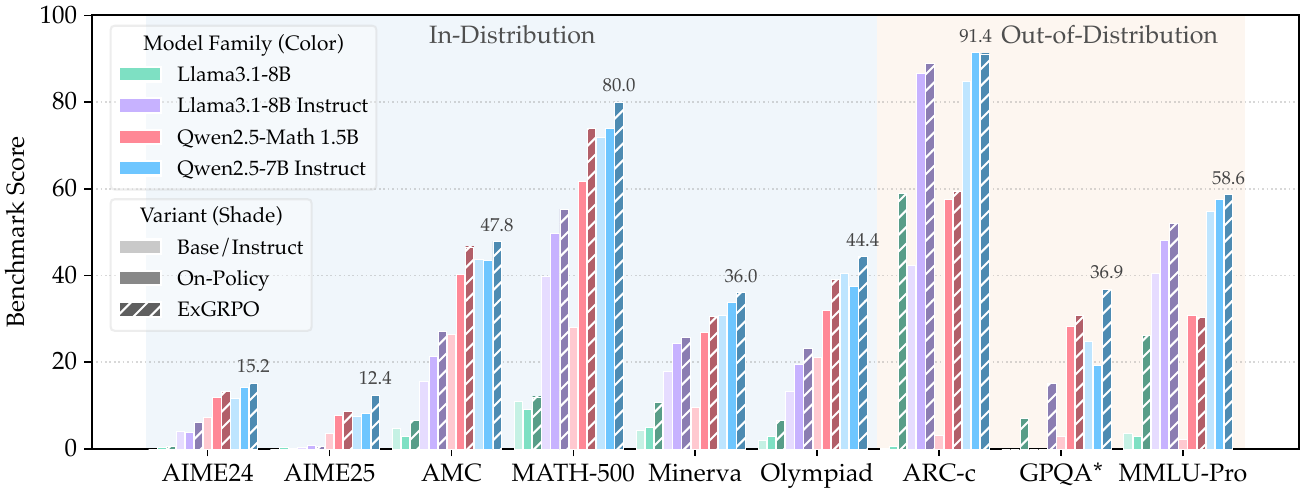}
    \caption{A comparison of benchmark performance for different backbone models and training variants, showing performance on both in-distribution and out-of-distribution tasks (cf. \cref{sec:model-ext-nums}).}
\label{fig:bar_modelext}
\end{figure}

\vspace{-10px}

\paragraph{ExGRPO is robust across diverse model architectures and initializations.} We validate that the benefits of ExGRPO are not limited to a single model in \cref{fig:bar_modelext} and Appendix \cref{tab:extension_res}. 
ExGRPO has consistent improvements over baselines when applied to models of varying scales, including 1.5B and 8B parameter models, as well as models initialized from instruction-tuned checkpoints.
For these models, ExGRPO achieves in-distribution improvements of up to +5.3 points on Qwen2.5-Math 1.5B Base and +4.1 points on Qwen2.5-7B Instruct.
Notably, ExGRPO addresses a critical instability issue observed in on-policy RLVR, enabling successful training of Llama-3.1 base model on our data where the on-policy baseline method collapses. 
In a further experiment using the LUFFY model, which was trained with external off-policy data, we find that continual RLVR with model own experience yields greater performance gains than relying on external data. However, applying on-policy training to LUFFY still resulted in a degradation, highlighting the advantage of ExGRPO.
We observe that ExGRPO underperforms on certain out-of-distribution tasks under \textit{continual RLVR setting}, which we attribute to reduced on-policy exposure during joint mini-batch optimization with experience replay.




\subsection{Extension to RL with Continuous Rewards}

{
Beyond the binary-reward RLVR setup, extending ExGRPO to standard RLHF with continuous preference-model rewards is a vital next step.  
In this setting, we replace rollout success rates with the within-query reward variance computed over multiple rollouts. 
Reward variance serves as a natural signal of difficulty, and trajectory entropy remains applicable as a measure of reasoning validity.}

{For each query $q$ with $K$ rollouts, we compute the reward standard deviation $\sigma_q$. Since $\sigma_q$ has no intrinsic scale, we map it to a discrete difficulty bucket using relative, batch-wise normalization.
Given all variances $\{\sigma_1, \ldots, \sigma_M\}$ within the current batch, we apply a linear min-max normalization
$
\tilde{\sigma}_q = 
\frac{\sigma_q - \sigma_{\min}}{\max(\sigma_{\max} - \sigma_{\min}, \varepsilon)},
$
where $\varepsilon$ prevents degeneracy when the batch variance collapses.  
We then discretize $\tilde{\sigma}_q \in [0,1]$ into $\{0,\ldots,K_{\mathrm{rollout}}\}$ via
$
\operatorname{round}\!\left(\tilde{\sigma}_q \cdot K_{\mathrm{rollout}}\right).
$
Each query is then assigned to its corresponding bucket and updated using the ExGRPO mechanism originally developed for the binary-correctness setting. 
}
\begin{wraptable}{l}{0.5\textwidth}
    \vspace{-10pt}
    \centering
    \caption{{Preliminary results extending ExGRPO to RL with continuous rewards.}}
    \scalebox{0.83}{
    \begin{tabular}{lcccc}
    \toprule
    \textbf{Method} & \textbf{ArenaHardV2} & \textbf{IFEval} & \textbf{IFBench} & \textbf{Avg.} \\
    \midrule
    GRPO & 15.1 & 26.1 & 18.7 & 20.0 \\
    ExGRPO & \textbf{16.4} & \textbf{27.9} & \textbf{20.7} & \textbf{21.7} \\
    \bottomrule
    \end{tabular}
    }
    \vspace{-10pt}
\end{wraptable}
{Under this formulation, buckets still preserve the nature of discrete difficulty levels. As a result, ExGRPO’s sampling mechanism, which was previously centered around medium success rates, now preferentially selects queries with moderate reward variance, i.e., those that are neither overly stable nor excessively noisy. These correspond to intermediate-difficulty queries where learning gains are often most substantial.}

{To validate feasibility, we conducted a study on 7k WildChat-IF samples \citep{zhaowildchat,bhaskar2025language} with on-policy RL training (2 epochs), using Qwen2.5-7B-Base as the base model and Skywork-Reward-Llama-3.1-8B-v0.2 \citep{liu2024skywork} as the reward model. Results on chat and instruction-following benchmarks show a improvement of ExGRPO extension.}


\section{Analysis and Discussion}
To understand the effectiveness of ExGRPO, we investigate the following research questions: 1) How does experience replay influence training dynamics, especially for different models? 2) What is the impact of experience selection on overcoming performance bottlenecks? 3) How do the components of ExGRPO, e.g., experience management, contribute to overall performance?

\begin{figure}[htbp]
    \vspace{-8px}
    \centering
    \includegraphics[width=0.95\textwidth]{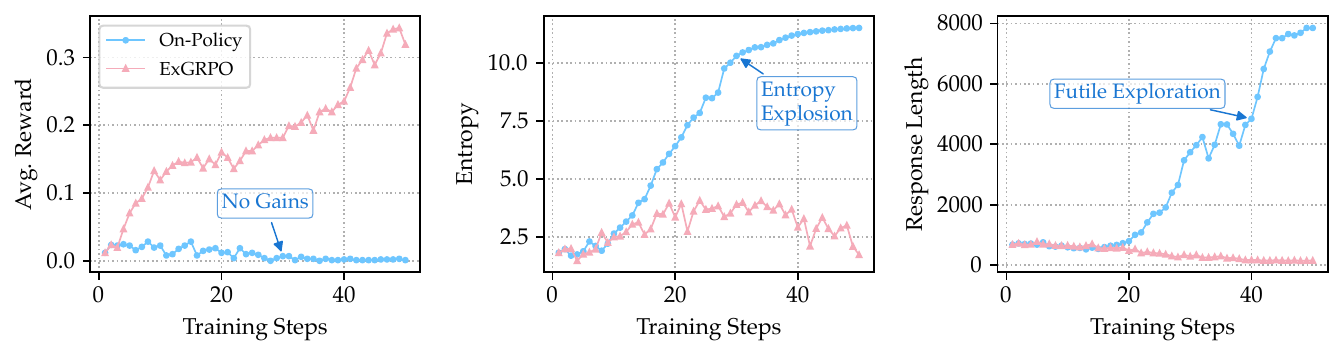} 
    \vspace{-8px}
    \caption{Learning dynamics of On-Policy vs. ExGRPO during training  Llama-3.1 8B. ExGRPO stabilizes training and achieves higher rewards, while on-policy suffers from training collapse.}
    \label{fig:weak-dynamics}
\end{figure}

\vspace{-15px}
\subsection{Training Dynamics}

\begin{wrapfigure}{l}{0.33\textwidth}
    \vspace{-15px}
    \centering
    \includegraphics[width=0.33\textwidth]{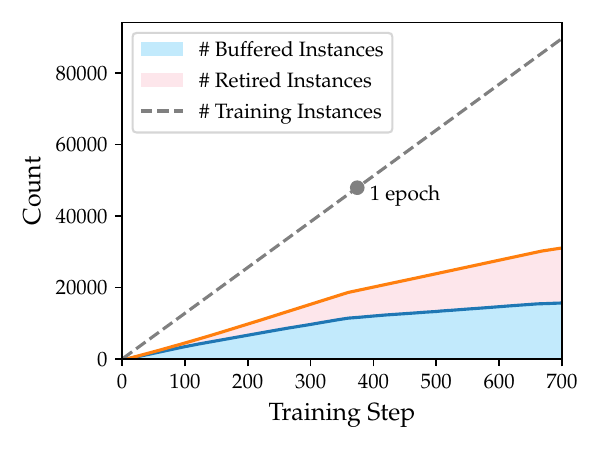} 
    \caption{Dynamics of experience replay buffer and retried set.}
    \label{fig:experience-bucket}
    \vspace{-15px}
\end{wrapfigure}

\paragraph{Experience replay helps stabilize training for weaker models.}
\cref{fig:weak-dynamics} depicts the training dynamics of the Llama3.1-8B base model. Due to its limited capacity, this model struggles to solve most problems, resulting in low rewards under on-policy training. Consequently, the exploration signal collapses, and the model finally encountered entropy explosion on challenging training data. 
In contrast, experience replay allows the model to exploit the ``lucky hits'' of correct solutions encountered early on. By replaying these successes, the model receives meaningful reward signals and is guided toward an exploration trajectory that matches its current capability, thereby stabilizing learning and preventing premature collapse.

\begin{wrapfigure}{r}{0.38\textwidth}
  \vspace{-15px}
  \begin{center}
    \includegraphics[width=0.38\textwidth]{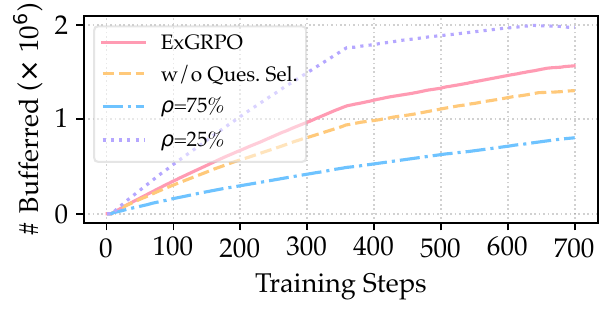}
    \hfill
    \includegraphics[width=0.38\textwidth]{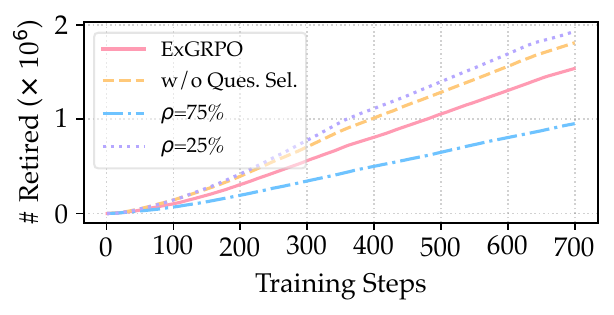}
  \end{center}
  \vspace{-13px}
  \caption{Dynamics of experience under different data conditions.} 
  \label{fig:compare-experience-ablation}
  \vspace{-20px}
\end{wrapfigure}
\paragraph{Experience replay improves data efficiency.}
For stronger model, \cref{fig:experience-bucket} shows the evolution of the replay buffer and retired set. Their sum reflects the number of examples on which the model eventually succeeds. By the later stages, about half of the dataset achieves at least one successful trajectory (\texttt{Pass@8}), validating the promise of experiential learning: the model retains strong reasoning potential, and revisiting past successes enhances data utilization.
The retired set further enhances efficiency: although initially small, it grows rapidly as capability improves and eventually matches the buffer in size. Retiring solved problems shifts training toward harder ones, reducing redundancy and explaining the slower buffer growth in later phases.

\textbf{The efficiency of experience utilization, rather than its sheer volume, is the primary driver of model performance.} 
\cref{fig:compare-experience-ablation} illustrates the dynamics of the experience buffer and the retired set under different data conditions.
When trained on the same amount of data, the absence of question selection mechanisms  (e.g., \textit{ExGRPO} vs. \textit{w/o Ques. Sel.}) leads to growth of the retired set and a corresponding reduction in buffer size (primarily due to repeated exposure to easy questions). 
As a result, experience replay becomes inefficient, with high consumption of samples but low return in terms of performance gains.
We further examine the scaling effects of the experience replay ratio, $\rho$. A high ratio ($\rho = 75\%$) leads to over-exploitation, stifling exploration and degrading performance below the on-policy baseline. Conversely, a low ratio ($\rho = 25\%$) prioritizes fresh on-policy learning; however, while it enlarges the experience buffer, the retired set grows only marginally.
This indicates that accumulating more experiences does not guarantee improved capabilities. 
Ultimately, effective replay relies on balancing exploration with efficient sample reuse rather than simply maximizing buffer size, with optimal performance achieved at a balanced $\rho=50\%$ (see results in Appendix \cref{tab:ablation}).

\begin{wrapfigure}{l}{0.41\textwidth}
  \vspace{-20px}
  \begin{center}
    \includegraphics[width=0.41\textwidth]{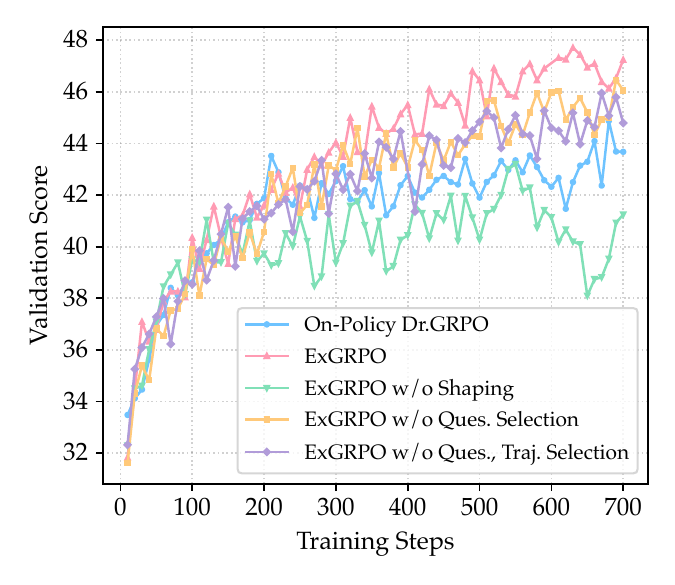}
  \end{center}
  \vspace{-13px}
  \caption{Comparison of validation performance of different ExGRPO variants.} 
  \label{fig:select-ablation}
  \vspace{-10px}
\end{wrapfigure}

\subsection{Ablation Studies}

\paragraph{Empirical ablation supports ExGRPO’s selection heuristics.}
We conduct an ablation study to assess the contributions of \textbf{core} components of ExGRPO.
\cref{fig:select-ablation} shows performance dynamics of validation set (see \cref{sec:training-hparams}).
For both question and trajectory selection, our analytically derived heuristics consistently outperform random selection, while random selection itself offers marginal gains over the on-policy baseline.
Policy shaping mitigates this by ensuring exploitation does not hinder exploration. 
It is also noteworthy that applying shaping alone in on-policy RLVR yields no improvement~\citep{yan2025learning}, confirming that ExGRPO’s gains cannot be attributed to naive shaping.
The above results and additional ablations on test set (see \cref{sec:more-ablations}) provide empirical support for our design of ExGRPO.

\paragraph{{Sensitivity to difficulty sampling strategies.}}
{Our method employs a probabilistic sampling scheme based on a Gaussian distribution 
$p \propto \mathcal{N}(\mathrm{Acc}(q);\, \mu=0.5, \sigma)$. 
To assess the sensitivity of ExGRPO to the width of this distribution, we vary 
$\sigma \in \{0.5, 1.5\}$. A smaller $\sigma$ corresponds to a narrower medium-difficulty band, whereas a larger $\sigma$ approaches a near-uniform sampling distribution. We also evaluate a Hard-Biased variant by shifting the mean to $\mu=0.25$, which emphasizes harder samples (25\% accuracy) while reducing the chance of revisiting easier ones (75\% accuracy).
As shown in Table~\ref{tab:sigma}, the asymmetric sampling strategies yield inferior performance compared with the symmetric medium-difficulty focus ($\mu=0.5$), confirming that balanced uncertainty regions provide the most effective signal.}
\begin{table*}[htb]
\centering
\caption{{Sensitivity analysis of the Gaussian sampler with varying sampling focus and bias. 
\textbf{Bold} indicates the best results across settings for each column.}}
\label{tab:sigma}
\setlength{\tabcolsep}{3.0pt}
\renewcommand{\arraystretch}{1.25}
\resizebox{\textwidth}{!}{
\begin{tabular}{lcccccc>{\columncolor{IDbg!36}}c|ccc>{\columncolor{OODbgBlue!66}}c}
\toprule
\multirow{2}{*}{\textbf{Sampling Strategy}} 
& \multicolumn{7}{c}{\textbf{In-Distribution Performance}} 
& \multicolumn{4}{c}{\textbf{Out-of-Distribution Performance}} \\
\cmidrule(lr){2-8} \cmidrule(lr){9-12}
& \textbf{AIME24} & \textbf{AIME25} & \textbf{AMC} & \textbf{MATH} & \textbf{Minerva} 
& \textbf{Olympiad} & \textbf{Avg.} 
& \textbf{ARC-c} & \textbf{GPQA}$^{*}$ & \textbf{MMLU-P} & \textbf{Avg.} \\
\midrule
On-Policy          
& 24.9 & 15.5 & 59.2 & 84.8 & 38.2 & 49.3 & 45.3 
& 82.6 & 37.4 & 49.2 & 56.4 \\
\innerrow
\textbf{ExGRPO, ($\mu{=}0.5, \sigma{=}1$)} 
& \textbf{31.6} & 18.7 & \textbf{66.3} & \textbf{87.4} & 36.0 & \textbf{50.1} & \textbf{48.4}
& \textbf{84.7} & 37.4 & \textbf{52.9} & \textbf{58.3} \\
\midrule
Wider Focus ($\sigma{=}1.5$) 
& 22.0 & \textbf{19.2} & 62.3 & 85.4 & \textbf{38.2} & 49.9 & 46.2 
& 81.7 & \textbf{44.4} & 48.6 & 58.2 \\
Narrower Focus ($\sigma{=}0.5$) 
& 24.9 & 17.4 & 64.9 & 86.6 & 34.2 & 49.0 & 46.2
& 72.9 & 40.4 & 51.1 & 54.8 \\
\midrule
Hard-Biased ($\mu{=}0.25$) 
& 22.4 & 19.6 & 61.9 & 86.0 & 38.2 & 49.8 & 46.3
& 83.6 & 41.4 & 49.4 & 58.2 \\
\bottomrule
\end{tabular}
}
\end{table*}

\section{Conclusion}
While experiential RL offers a promising approach to mitigate sample inefficiency in RLVR for large reasoning models, this area still lacks systematic exploration of experience value and management. In this work, we address this gap by first examining what constitutes a valuable reasoning experience, then introducing ExGRPO, a framework that strategically manages, selects and replays high-quality experiences.
Experiments across multiple backbones and benchmarks show that ExGRPO consistently improves performance and stabilizes training where on-policy RLVR fails.

\section*{Ethics statement}
This research follows the ICLR Code of Ethics. We have taken care to ensure that the datasets, methodologies, and model used in our experiments are ethically sound. Our work is primarily theoretical and empirical, focusing on optimizing reasoning models. As such, it does not involve sensitive personal data, or content likely to introduce societal bias, harm, or discrimination. 

\section*{Reproducibility Statement}
We provide the experimental setups and hyperparameters in \cref{sec:exp_setup}, with further details in \cref{sec:supp_expriements}. The code and model weights are open-sourced.

\section*{Acknowledgments}
We extend our gratitude to all the reviewers for their valuable feedback and suggestions. This work was supported by the Shanghai Artificial Intelligence Laboratory. This work was supported by the Science and Technology Development Fund of Macau SAR (Grant Nos. FDCT/0007/2024/AKP, EF2024-00185-FST), the UM and UMDF (Grant Nos. MYRG-GRG2024-00165-FST-UMDF, MYRG-GRG2025-00236-FST, SHMDF-AI/2026/001), and the National Natural Science Foundation of China (Grant No. 62266013).

\renewcommand\bibname{References}

\bibliography{iclr2026_conference}
\bibliographystyle{iclr2026_conference}

\clearpage
\appendix

\section*{Appendix}
\addcontentsline{toc}{part}{Appendix}  

\startcontents[appendix]
\printcontents[appendix]{}{1}{\subsection*{Appendix Contents}}

\section{Limitations}
\label{sec:limitation}
While our work demonstrates the significant benefits of principled experience management in RLVR, we acknowledge several limitations.
First, since our research scope focused on RLVR, the evaluation tasks are verifiable problems, i.e., mathematical and general reasoning benchmarks where trajectories can be more easily verified as correct or incorrect. 
The applicability of our correctness-based bucketing strategy to more open-ended tasks (e.g., creative writing), where rewards are often subjective and dense, remains an open question. 
Second, our framework's definition of a valuable experience is based on heuristics that, while powerful, might be incomplete. ExGRPO might neglect the learning potential of ``valuable failures'' where some incorrect paths that contain useful training signals \citep{zhu2025surprising}. This focus on exploitation could risk premature convergence in some scenarios. 
Finally, while ExGRPO is built upon a relative policy optimization objective, its interaction with other families of RL algorithms has not been explored.
Future work includes extending our method to multi-modal reasoning and agentic reinforcement learning.

\section{The Use of Large Language Models (LLMs)}
We utilize large language models to assist with proofreading, and polishing the text of this paper. The generated responses are treated as decision references and are not adopted wholesale. Importantly, all scientific contributions, conceptual developments, experimental designs, and final decisions were made by human researchers. The outputs from LLMs were treated as references or drafts, subject to human review, modification, and validation. 

\section{ExGRPO Algorithm}
\begin{algorithm}[ht]
\caption{ExGRPO: Experiential Group Relative Policy Optimization}
\label{alg:bucket-lowH}
\begin{algorithmic}[1]
\Require Dataset $\mathcal{D}$, batch size $B$, experience ratio $\rho\in[0,1]$, rollout trails $K$, reward model $R(\cdot)$.
\State \textbf{Initialize:} Policy model $\pi_\theta$, experience replay buffer $\mathcal{E} \leftarrow \emptyset$, retired set $\mathcal{S} \leftarrow \emptyset$.
\For{each training step}
 \State \textit{\textcolor{gray!80}{$\triangleright$ Phase 1: Experience Management.}}
  \If{$|\mathcal{E}| > 0$}
    \State Partition $\mathcal{E}$ to $k$ buckets $\mathcal{U}_k$ by \textit{last} rollout correctness $\mathrm{Acc} = \frac{1}{K} \sum_{k=1}^{K} \mathbf{1}[r_k = 1].$
    \State Obtain sampling probability: $p_k = \mathcal{N}(k/K; 0.5, \sigma=1)$ for nonempty $\mathcal{U}_k$
    \State Calculate experience sampling amount: $n \gets \mathsf{min}(\lfloor \rho B \rfloor, |\mathcal{E}|)$
    \State Sample $n$ experience: $\{e_1, \ldots, e_{n}\} \sim \mathsf{BukcetSample}(\mathcal{U}, \{p_k\})$ \Comment{cf.~\cref{alg:bucket-multinomial}.}
    \For{each experience $e \in \{e_1, \ldots, e_{n}\}$} \Comment{cf. $\mathcal{E}{:}\ q^*\mapsto\{o^*\}$}
        \For{each experience trajectory $o_i \in e$ }
            \State Compute trajectory entropy under $\pi_\theta$: $H(o_i;\pi_\theta) = -\tfrac{1}{|o_i|}\sum_t \log \pi_\theta(o_i^{t} | q^*, o_i^{<t})$
        \EndFor
        \State Select the trajectory with the lowest entropy: $o^* \leftarrow \arg\min_{e \in \mathcal{E}} H(o_i;\pi_\theta)$ 
    \EndFor
    \State Experiential policy optimization data $\mathcal{B}_{\mathrm{exp}}  \gets \{(q^*,o^*)\}^n$
    \State On-policy optimization data $\mathcal{B}_{\mathrm{on}} \gets \{q\}^{{B{-}n}} \sim \mathcal{D}$
  \Else
    \State $\mathcal{B}_{\mathrm{exp}}\gets\emptyset,\ \mathcal{B}_{\mathrm{on}} \gets \{q\}^{B} \sim \mathcal{D}$ 
  \EndIf
  \State \textit{\textcolor{gray!80}{$\triangleright$ Phase 2: Experiential Policy Optimization.}}
  \State Construct batch data: $\mathcal{B} \gets \mathcal{B}_\mathrm{exp} \cup \mathcal{B}_\mathrm{on}$  
  \For{each sample $q$ $\in \mathcal{B}$}
        \State Generate rollouts: $\{o_1, \ldots, o_K\} \sim \pi_\theta(\cdot | q)$, $\{o_1, \ldots, o_{K-1}\} \sim \pi_\theta(\cdot | q^*)$
        \State Compute rewards and success: $\{r_j\}_{j=1}^K = \{R(q, o_j)\}_{j=1}^K$; $s = \sum_{j=1}^K \mathbb{I}[r_j = 1]$
        \If{$s = K$} \Comment{All successful}
            \State $\mathcal{S} \leftarrow \mathcal{S} \cup \{q\}$ \Comment{Add to retired set}
        \ElsIf{$s < K$} \Comment{Partial success}
            \State Record experience: $\mathcal{E}[q] \cup \{o_j~|~r(o_j)=1\}$
        \EndIf
    \EndFor
  \State  Compute advantage and update policy model with Eq.~\ref{eq:exgrpo_final_corrected_labeled}.
  \State  Remove well-learned samples: $\mathcal{E} \leftarrow \mathcal{E} \setminus \{q : q \in \mathcal{S}\}$
\EndFor
\State \Return Optimized policy model $\hat\pi_{\theta}$
\end{algorithmic}
\end{algorithm}

\paragraph{Overview of ExGRPO (\cref{alg:bucket-lowH}).}
ExGRPO organizes training into two phases: \emph{experience management} and \emph{experiential policy optimization}. In first phase, the replay buffer $\mathcal{E}$ is partitioned into buckets based on the most recent rollout correctness $\mathrm{Acc}(q)$, and a Gaussian weighting centered at $0.5$ biases sampling toward medium-difficulty problems. 
From these buckets, $n=\min(\lfloor \rho B\rfloor,|\mathcal{E}|)$ experiences are drawn, and for each, the trajectory with the lowest entropy under the current policy is selected as $o^*$. 
These anchors form the experiential sub-batch $\mathcal{B}_{\mathrm{exp}}$, while the remaining slots are filled with on-policy samples $\mathcal{B}_{\mathrm{on}}$. 
In the second phase, each batch item generates $K$ rollouts (or $K{-}1$ plus the past trajectory), rewards are computed, and prompts achieving full success are retired, while partially successful ones refresh $\mathcal{E}$. The policy is updated using objective in Eq.~\ref{eq:exgrpo_final_corrected_labeled}. 
\paragraph{Details of Bucketed Sampling (\cref{alg:bucket-multinomial}).}
This sampler first normalizes probabilities $\mathbf{p}$ over non-empty buckets, then draws bucket counts $\mathbf{c}$ using a sequential-binomial implementation of the multinomial distribution, ensuring $\sum_k c_k=n$ where $n$ is the number of samples to be drawn. 
Within each bucket, $c_k$ items are sampled uniformly without replacement. 
This preserves the intended inter-bucket bias (e.g., Gaussian weighting by accuracy) while remaining unbiased within buckets. 
The method assumes $c_k \le |\mathcal{U}_k|$ for feasibility. 
In practice, to handle rare edge cases where a bucket becomes empty, clipping and redistributing the sampling counts will be applied in such scenarios.
\begin{algorithm}[htbp]
\caption{$\mathsf{BucketSample}()$. Bucketed Multinomial \& Within-Bucket Uniform Sampling}
\label{alg:bucket-multinomial}
\begin{algorithmic}[1]
\Require Valid buckets $\mathcal{U}=\{(\mathrm{Acc},\{e\})\}_{k=1}^K$; probabilities $\mathbf{p}\in[0,1]^K$; sample size $n \in \mathbb{N}$;
\Ensure probabilities $\mathbf{p}\in[0,1]^K$ with $\sum_{k=1}^K p_k=1$

\Function{Multinomial}{$n, \mathbf{p}$}
    \State \textit{\#~Note: sample $X \in \mathbb{N}^d$ such that $\sum_{i=1}^d X_i = n$}
    \State Initialize $X \gets (0,\dots,0) \in \mathbb{N}^d$, $m \gets n$
    \For{$i = 1$ to $d-1$}
        \State Draw $X_i \sim \mathsf{Binomial}\!\left(m, \tfrac{p_i}{1 - \sum_{j < i} p_j}\right)$
        \State Update $m \gets m - X_i$
    \EndFor
\State \Return collection of $X$
\EndFunction
\Statex
\State \textit{\# Note: sampling within a bucket is uniform \emph{without} replacement; requires $c_k \le |\mathcal{U}_{\mathrm{Acc}}|$ for all $k$.}
\State $\mathbf{c} \sim \operatorname{Multinomial}(n,\mathbf{p})$ \Comment{Draw all bucket counts in one shot}
\State $\texttt{sampled\_items} \gets \{\}$
\For{$k \gets 1$ \textbf{to} $K$}
    \State $S \gets \mathsf{UniformSample}(\mathcal{U}_k,\, c_k)$ \Comment{Without replacement.}
    \State $\texttt{sampled\_items}[\mathcal{U}_k] \gets S$
\EndFor
\State \Return $(\texttt{sampled\_items},\; [])$
\end{algorithmic}
\end{algorithm}

\paragraph{Complexity.}
Experience management in ExGRPO is efficient. Per step, ExGRPO performs $O(BK)$ reward evaluations and $O(BK)$ likelihood computations. 
Bucket partitioning and sampling are $O(|\mathcal{E}|)$ for bucketization (amortized) and $O(K + n)$ for drawing counts and items.

\paragraph{Correctness and Constraints.}
The procedure assumes $c_k \le |\mathcal{U}_k|$ for all nonempty buckets. In practice, this is satisfied with high probability when (i) the support of $\mathbf{p}$ excludes tiny buckets or (ii) we clip $c_k$ to $|\mathcal{U}_k|$ and redistribute any deficit to remaining buckets. We adopt the simple feasibility requirement as stated in \Cref{alg:bucket-multinomial} for clarity.

\paragraph{Practical Notes.}
We renormalize $\mathbf{p}$ over \emph{nonempty} buckets before sampling; this avoids assigning mass to empty buckets. The scheme is streaming-friendly and yields $O(K)$ arithmetic plus the cost of within-bucket draws. When used in ExGRPO, setting $\mathbf{p}\propto \mathcal{N}(k/K;\mu,\sigma)$ provides a tunable curriculum over the rollout correctness spectrum. In this paper, we consistently set $\mu=0.5, \sigma=1$ to align with  our analytical findings in \cref{sec:preliminary}. 

\section{Theoretical Analysis of ExGRPO}
\label{sec:theoretical-analysis}
\subsection{Recap and Formulation}
Let $q\sim\mathcal{D}$ be a query. The rollout (reference) policy is $\pi_{\theta_{\text{old}}}$, the current policy is $\pi_{\theta}$, and the experiential policy (that produced a replayed trajectory) is $\pi_{\theta_{\text{past}}}$.
For each $q$, form a group $\mathcal{G}_q=\{o_i\}_{i=1}^K$ with $o_i\sim \pi_{\theta_{\text{old}}}(\cdot\mid q)$ and use GRPO's within-group standardization
\begin{equation}
\label{eq:grpo-adv}
\widehat{A}(o_i,\mathcal{G}_q)\;=\;\frac{r(q,o_i)-\mu_{\mathcal{G}_q}}{\sigma_{\mathcal{G}_q}},\quad
\mu_{\mathcal{G}_q}=\tfrac{1}{K}\sum_{i=1}^K r(q,o_i),\ \ \sigma_{\mathcal{G}_q}=\text{Std}\big(\{r(q,o_i)\}_{i=1}^K\big),
\end{equation}
Based on the practice we detailed in the main body of paper, we follow the simplifications advocated by Dr.GRPO \citep{drgrpo}, removing length and standard deviation normalization. For mixed-policy optimization, we also remove clipping and use policy shaping.

ExGRPO mixes (i) on-policy groups $\mathcal{G}_q$ and (ii) \emph{mixed} groups $\mathcal{G}_{q^*}=\{o^*\}\cup\{o_i\}_{i=1}^{K-1}$ for $q^*$ sampled from an experience buffer, where $o^*\sim \pi_{\theta_{\text{past}}}(\cdot\mid q^*)$ and the $K\!-\!1$ fresh rollouts come from $\pi_{\theta_{\text{old}}}(\cdot\mid q^*)$. The ExGRPO mini-batch objective (Eq.~\ref{eq:exgrpo_final_corrected_labeled}) mixes on-policy and experiential (replayed) contributions with weight $\rho\in[0,1)$.

Per-token importance ratios for a trajectory $o$ comparing policy $\pi_\theta$ to policy $\pi_{\theta'}$ (i.e., $\pi_{\theta_{\text{old}}}$ or $\pi_{\theta_{\text{past}}}$) are:

\begin{equation}
    w_t(o_i;\theta,\theta') := \frac{\pi_\theta(o_{i,t}\mid q,o_{i,<t})}{\pi_{\theta'}(o_{i,t}\mid q,o_{i,<t})},
\qquad
W(o_i;\theta,\theta'):=\prod_{t=1}^{|o|} w_{t}(o_{i,t};\theta,\theta').
\end{equation}

We will sometimes write $w(\cdot)$ or $W(\cdot)$ for brevity.  For the replayed item $o^*$, ExGRPO uses a bounded \emph{policy-shaping} transform $f(w)\,{=}\,\frac{w}{w+\beta}$ with $\beta>0$
instead of hard clipping; this is analogous to truncated/saturated importance ratios in ACER and V-trace, known to control variance at small bias \citep{acer,impala}.
We assume bounded rewards, absolute continuity of policies (mutual support overlap), and an auxiliary KL control as in  \citet{schulman2015trust,schulman2017proximal}.

\paragraph{Assumptions.} For formal statements we will use the following standard assumptions where invoked:

\noindent \textit{A1. Support assumption.} For any trajectory $o$ produced by a past policy used in replay, $\pi_\theta(o_t\mid\cdot)>0$ whenever $\pi_{\theta_{\text{past}}}(o_t\mid\cdot)>0$ (i.e., target policy has support containing past policy).

\noindent \textit{A2. Bounded second moment.} For any random variable $X$ under consideration (rewards, advantages), second moments exist and are finite. Formally, for any policy model $\pi_{\theta}$ and query distribution $\mathcal{D}$,
\begin{equation}
\mathbb{E}_{q\sim\mathcal{D},\,o\sim\pi_{\theta}(\cdot\mid q)}\big[X^2\big] < \infty
\end{equation}
so finiteness of $\mathbb{E}[X^2]$ guarantees that $\mathrm{Var}(X)= \mathbb{E}[X^2] - \big(\mathbb{E}[X]\big)^2$ is well-defined. 

\noindent \textit{A3. Bounded importance ratios.} Assume finite trajectory length, there exists $M\ge1$ such that for all relevant trajectories and timesteps,

\begin{equation}
  \sup_{o,t} \frac{\pi_\theta(o_t\mid\cdot)}{\pi_{\theta_{\text{past}}}(o_t\mid\cdot)} \le m,
  \quad\text{and hence}\quad
  \sup_o W(o;\theta,\theta_{\text{past}}) \le M:=m^{|o|}.
\end{equation}

\subsection{Unbiasedness of Experiential Gradient}
We show that, when the replayed trajectory is corrected by the exact per-token importance weight $W(o^*;\theta,\theta_{\text{past}})$, the contribution of that trajectory yields an unbiased estimate of the corresponding on-policy term (no bias is introduced by using replay plus exact importance reweighting), even when the advantage $\widehat A$ depends on the full mixed group $\mathcal{G}$.

\begin{theorem}[Unbiasedness]
\label{thm:unbiased}
Under Assumption A1, let $\mathcal{G}_{q^*}=\{o^*\}\cup\{o_i\}_{i=1}^{K-1}$ be a mixed group where $o^*$ was sampled from $\pi_{\theta_{\text{past}}}$ and $\{o_i\}$ were sampled from $\pi_{\theta_{\text{old}}}$. For any measurable function $g$ of a trajectory and its group (e.g. $g(o,\mathcal{G})=\sum_{t}\nabla_\theta\log\pi_\theta(o_t|q,o_{<t})\cdot \widehat A(o,\mathcal{G})$), the importance-weighted expectation equals the on-policy expectation:
$$
\mathbb{E}_{o^*\sim\pi_{\theta_{\text{past}}}} \big[ W(o^*;\theta,\theta_{\text{past}})\, g(o^*,\mathcal{G}_{q^*}) \mid \{o_i\}_{i=1}^{K-1}\big]
=
\mathbb{E}_{\tilde o\sim\pi_{\theta}} \big[ g(\tilde o,\tilde{\mathcal{G}})\mid \{o_i\}_{i=1}^{K-1}\big],
$$
where the right hand side is the expectation with $o^*$ replaced by sampling $\tilde o\sim\pi_\theta$ and forming group $\tilde{\mathcal{G}}$ accordingly.
\end{theorem}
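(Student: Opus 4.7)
The plan is to reduce the identity to the standard importance-sampling change-of-measure formula by first conditioning on the fresh trajectories $\{o_i\}_{i=1}^{K-1}$, which are common to both sides of the claim, and then exhibiting $W(o^*;\theta,\theta_{\text{past}})$ as the Radon--Nikodym derivative of $\pi_\theta(\cdot\mid q^*)$ with respect to $\pi_{\theta_{\text{past}}}(\cdot\mid q^*)$ on the trajectory space.

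\textbf{Step 1: freeze the rest of the group.} Since the conditioning on $\{o_i\}_{i=1}^{K-1}$ is identical on both sides, treat them as fixed. The group $\mathcal{G}_{q^*}=\{o^*\}\cup\{o_i\}_{i=1}^{K-1}$ is then a deterministic, measurable function of $o^*$ alone, so the integrand collapses to a measurable function of a single trajectory:
\begin{equation*}
\phi(o)\;:=\;g\!\left(o,\;\{o\}\cup\{o_i\}_{i=1}^{K-1}\right).
\end{equation*}
It remains to show $\mathbb{E}_{o^*\sim\pi_{\theta_{\text{past}}}}\![W(o^*;\theta,\theta_{\text{past}})\,\phi(o^*)]=\mathbb{E}_{\tilde o\sim\pi_\theta}[\phi(\tilde o)]$.

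\textbf{Step 2: telescope the per-token ratios.} By the autoregressive factorization $\pi(o\mid q)=\prod_{t=1}^{|o|}\pi(o_t\mid q,o_{<t})$ and Assumption A1, each per-token ratio $w_t$ is well-defined on the support of $\pi_{\theta_{\text{past}}}$, and the product telescopes:
\begin{equation*}
W(o^*;\theta,\theta_{\text{past}})\;=\;\prod_{t=1}^{|o^*|}\frac{\pi_\theta(o^*_t\mid q^*,o^*_{<t})}{\pi_{\theta_{\text{past}}}(o^*_t\mid q^*,o^*_{<t})}\;=\;\frac{\pi_\theta(o^*\mid q^*)}{\pi_{\theta_{\text{past}}}(o^*\mid q^*)}.
\end{equation*}
Hence $W$ is exactly the likelihood ratio between the two trajectory-level distributions. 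The change of measure then follows by the standard importance-sampling calculation,
\begin{align*}
\mathbb{E}_{o^*\sim\pi_{\theta_{\text{past}}}}\!\bigl[W(o^*;\theta,\theta_{\text{past}})\,\phi(o^*)\bigr]
&=\sum_{o}\pi_{\theta_{\text{past}}}(o\mid q^*)\,\frac{\pi_\theta(o\mid q^*)}{\pi_{\theta_{\text{past}}}(o\mid q^*)}\,\phi(o) \\
&=\sum_{o}\pi_\theta(o\mid q^*)\,\phi(o)\;=\;\mathbb{E}_{\tilde o\sim\pi_\theta}[\phi(\tilde o)],
\end{align*}
where A1 guarantees that every trajectory of positive $\pi_\theta$-mass is also of positive $\pi_{\theta_{\text{past}}}$-mass, so no mass is lost when switching measures. (For continuous trajectory spaces the sums become integrals with respect to a common dominating base measure; the argument is identical.)

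\textbf{Main obstacle.} The only conceptual subtlety is the group-dependent advantage $\widehat A(o^*,\mathcal{G}_{q^*})$: since $\mu_{\mathcal{G}_{q^*}}$ and $\sigma_{\mathcal{G}_{q^*}}$ themselves involve $o^*$, one might fear that reweighting $o^*$ interacts nontrivially with the group statistics and produces a hidden bias. The conditioning step above resolves this cleanly: once $\{o_i\}_{i=1}^{K-1}$ are frozen, the whole group is a deterministic function of $o^*$, so changing the measure on $o^*$ automatically updates its contribution to the group statistics, exactly mirroring what occurs on the right-hand side where $\tilde o\sim\pi_\theta$ induces its own group. Beyond this, the only technical prerequisite is the support condition A1; Assumptions A2 and A3 are not needed for unbiasedness itself but would control variance in any subsequent concentration or convergence analysis.
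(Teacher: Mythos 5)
Your proof is correct and follows essentially the same route as the paper's: condition on the fresh rollouts so the group becomes a deterministic function of $o^*$, observe that the product of per-token ratios telescopes to the trajectory-level likelihood ratio, and apply the standard change-of-measure under the support assumption A1. Your version is slightly more explicit than the paper's on two points the paper leaves implicit --- the autoregressive telescoping of $W$ and why the group-dependent advantage causes no hidden bias --- but the underlying argument is identical.
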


\begin{proof}[Proof~Sketch]
Condition on the other group elements $\{o_i\}_{i=1}^{K-1}$, $\{{o_i\}_{i=1}^{K-1}}$ is implicit as the sampling of $o^*$ is independent of $\{{o_i\}_{i=1}^{K-1}}$, thus we have

\begin{equation}
\mathbb{E}_{o^*\sim\pi_{\theta_{\text{past}}}} \big[ W(o^*;\theta,\theta_{\text{past}})\, g(o^*,\mathcal{G}_{q^*}) \mid \{o_i\}_{i=1}^{K-1}\big]
=
\mathbb{E}_{o^*\sim\pi_{\theta_{\text{past}}}} \big[ W(o^*;\theta,\theta_{\text{past}})\, g(o^*,\mathcal{G}_{q^*})\big]
\end{equation}

Further, consider the expectation over $o^*$ drawn from $\pi_{\theta_{\text{past}}}$.
By the definition of expectation, we sum over all possible trajectories (which we denote by $o^*$) weighted by their probability under $\pi_{\theta_{\text{past}}}$:

\begin{equation}
\begin{aligned}
\mathbb{E}_{o^*\sim\pi_{\theta_{\text{past}}}}\big[W(o^*;\theta,\theta_{\text{past}})\,g(o^*,\mathcal{G}_{q^*})\big] &=
\sum_{o^*} \pi_{\theta_{\text{past}}}(o^*) \big[W(o^*;\theta,\theta_{\text{past}})\  g(o^*,\mathcal{G}_{q^*}) \big] \\ 
&=
\sum_{o^*} \pi_{\theta_{\text{past}}}(o^*) \frac{\pi_\theta(o^*)}{\pi_{\theta_{\text{past}}}(o^*)} g(o^*,\mathcal{G}_{q^*}) 
\\&=
\sum_{o^*} \pi_\theta(o^*) g(o^*,\mathcal{G}_{q^*}) 
\\&= \mathbb{E}_{\tilde o\sim\pi_\theta}\big[g(\tilde o,\mathcal{G})\big].
\end{aligned}
\end{equation}

The key is that $g(\cdot,\mathcal{G})$ is an arbitrary measurable function of a trajectory and the group, and the algebra above does not require $g$ to be independent of the group statistics (the sum is over the trajectory variable and the group members are treated as constants in the inner sum). 
This yields the stated identity.
Thus, we have shown that the importance-weighted experiential term is an unbiased estimator of the on-policy term, conditioned on the rest of the group members.
\end{proof}

\noindent\textbf{Remark.} The theorem shows that, with exact importance weights, replay introduces no bias even when the advantage is computed using group-dependent normalization ($\mu_{\mathcal{G}}$, $\sigma_{\mathcal{G}}$). In practice exact per-token ratios are used (product over timesteps), and the identity holds under A1.

\subsection{Variance Decomposition and Bounds}

Let $G_{\mathrm{exp}}$ denotes the random contribution from an experience-mixed group that includes one replayed trajectory corrected by importance sampling as:
\begin{equation}
    G_{\mathrm{exp}} \;=\; \frac{1}{K}\Big( W(o^*;\theta,\theta_{\text{past}})\,U(o^*,\mathcal{G}) + \sum_{i=1}^{K-1} U(o_i,\mathcal{G})\Big),
\end{equation}

where $U(o,\mathcal G)$ denotes the per-trajectory \emph{unweighted} gradient contribution (e.g. $\sum_t \nabla_\theta\log\pi_\theta(\cdot)\,\widehat A$).  

Importantly, in our implementation of GRPO the group standard-deviation term is {omitted} (we only mean-center advantages), so the $U(\cdot,\mathcal G)$ terms still depend on the group mean $\mu_{\mathcal G}$ but not on $\sigma_{\mathcal G}$. This reduces but does not eliminate the statistical coupling between group members.

Below we give a {general} variance bound that does not assume independence among group members, and then a {tighter} bound under a weak-independence assumption that is often approximately valid in practice (e.g., large $K$). 

\begin{proposition}[Variance upper bound for experiential term]
\label{prop:var_bound}
Under Assumption A2 (finite second moments) and A3 (bounded trajectory importance ratios), the experiential variance satisfies the following bounds.

\begin{enumerate}
\item (General, no independence) Without assuming independence between $U(o^*,\mathcal G)$ and the other group contributions, we have the conservative inequality
\begin{equation}
\operatorname{Var}\big(G_{\mathrm{exp}}\big)
\le
\frac{2}{K^2}\Big( \mathbb{E}\big[W^2 U^2\big] + (K-1)^2 \,\mathbb{E}\big[U^2\big]\Big).
\tag{A}
\end{equation}
If in addition \(W\) is uniformly bounded by \(M\) (Assumption A3), then
\begin{equation}
\operatorname{Var}\big(G_{\mathrm{exp}}\big)
\le
\frac{2}{K^2}\Big( M^2\,\mathbb{E}[U^2] + (K-1)^2\,\mathbb{E}[U^2]\Big)
= \frac{2\big(M^2+(K-1)^2\big)}{K^2}\,\mathbb{E}[U^2].
\tag{A'}
\end{equation}
\item (Tighter bound under weak/approximate independence) If the within-group unweighted contributions $\{U(o_i,\mathcal G)\}_{i=1}^{K-1}$ are pairwise uncorrelated and uncorrelated with \(W(o^*)U(o^*,\mathcal G)\) (e.g., approximate when $K$ is large), then
\begin{equation}
\operatorname{Var}\big(G_{\mathrm{exp}}\big)
\le
\frac{2}{K^2}\Big( \mathbb{E}\big[W^2 U^2\big] + (K-1)\,\mathbb{E}[U^2]\Big).
\tag{B}
\end{equation}
With the bound \(W\le M\) this yields
\begin{equation}
\operatorname{Var}\big(G_{\mathrm{exp}}\big)
\le \frac{2\big(M^2 + (K-1)\big)}{K^2}\,\mathbb{E}[U^2].
\tag{B'}
\end{equation}
\end{enumerate}
\end{proposition}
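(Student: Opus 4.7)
The plan is to split $K\cdot G_{\mathrm{exp}} = X + Y$, where $X := W(o^*;\theta,\theta_{\text{past}})\,U(o^*,\mathcal{G})$ is the reweighted replayed contribution and $Y := \sum_{i=1}^{K-1} U(o_i,\mathcal{G})$ collects the fresh on-policy contributions, and then bound $\operatorname{Var}(X+Y)$ by its second moment, decomposed so that the two summands match the two pieces on the right-hand side of each stated bound.

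For the general bound (A), I would first apply the elementary inequality $(a+b)^2 \le 2(a^2+b^2)$ to obtain $\operatorname{Var}(X+Y) \le \mathbb{E}[(X+Y)^2] \le 2(\mathbb{E}[X^2] + \mathbb{E}[Y^2])$, deliberately avoiding any independence hypothesis between $X$ and $Y$. The first piece is immediate: $\mathbb{E}[X^2] = \mathbb{E}[W^2\,U(o^*,\mathcal{G})^2]$. For the second, Cauchy--Schwarz gives $\bigl(\sum_{i=1}^{K-1} U_i\bigr)^2 \le (K-1)\sum_{i=1}^{K-1} U_i^2$, so $\mathbb{E}[Y^2] \le (K-1)\sum_{i=1}^{K-1} \mathbb{E}[U(o_i,\mathcal{G})^2] = (K-1)^2\,\mathbb{E}[U^2]$, where the last equality uses that the $K-1$ fresh rollouts are identically distributed under $\pi_{\theta_{\text{old}}}$ (and Assumption A2 gives finiteness). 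Dividing by $K^2$ yields (A); for (A') I would simply plug in the uniform bound $W\le M$ from Assumption A3 to replace $\mathbb{E}[W^2 U^2]$ by $M^2\,\mathbb{E}[U^2]$.

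For the tighter bound (B), I would invoke the pairwise-uncorrelated hypothesis so that the cross terms collapse: $\operatorname{Var}(Y) = \sum_{i=1}^{K-1}\operatorname{Var}(U(o_i,\mathcal{G})) \le (K-1)\,\mathbb{E}[U^2]$ and $\operatorname{Cov}(X,Y) = 0$, giving $\operatorname{Var}(X+Y) = \operatorname{Var}(X) + \operatorname{Var}(Y) \le \mathbb{E}[W^2 U^2] + (K-1)\,\mathbb{E}[U^2]$. The factor of $2$ in the stated bound can be retained by applying $\operatorname{Var}(X+Y) \le 2(\operatorname{Var}(X)+\operatorname{Var}(Y))$ uniformly, which both matches the form of (A) and absorbs the small residual correlations that the ``approximate independence'' qualifier is meant to hide; (B') then follows from (B) via the same substitution $\mathbb{E}[W^2 U^2] \le M^2\,\mathbb{E}[U^2]$ from Assumption A3.

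The only real subtlety --- and the reason the statement splits into two regimes --- is the implicit statistical coupling introduced by group-dependent advantage normalization: even after removing the $\sigma_{\mathcal{G}}$ term following Dr.GRPO, every $U(o_i,\mathcal{G})$ still depends on the group mean $\mu_{\mathcal{G}}$, so the $U_i$ are not literally independent and $Y$ is not literally independent of $W\,U(o^*,\mathcal{G})$. The general bound (A) sidesteps this by relying only on $L^2$ inequalities that require no independence, whereas (B) captures the sharper $\mathcal{O}(K^{-1})$ scaling one would expect of a nearly unbiased Monte Carlo average. A fully rigorous proof of (B) without the approximation would require either an explicit finite-$K$ bound on the induced within-group covariances (which is delicate because $\mu_{\mathcal{G}}$ couples all $K$ members) or a large-$K$ asymptotic argument, which is precisely why the proposition flags the hypothesis as weak/approximate rather than literal independence.
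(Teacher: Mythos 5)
Your proposal is correct and follows essentially the same route as the paper: bound $\operatorname{Var}(WU+S)$ by the second moment, use Cauchy--Schwarz to get $\mathbb{E}[S^2]\le (K-1)^2\,\mathbb{E}[U^2]$ for the general case, and let uncorrelatedness collapse the cross terms for the tighter case. The only cosmetic difference is that you invoke $(a+b)^2\le 2(a^2+b^2)$ directly, whereas the paper expands the square, applies Cauchy--Schwarz to the cross term, and then uses $a+b+2\sqrt{ab}\le 2(a+b)$ --- an algebraically identical step.
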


\begin{proof}[Proof~Sketch]
Write \(S=\sum_{i=1}^{K-1} U(o_i,\mathcal G)\). Then
\begin{equation}
\operatorname{Var}\big(G_{\mathrm{exp}}\big)
= \frac{1}{K^2}\operatorname{Var}\big(WU + S\big)
\le \frac{1}{K^2}\mathbb{E}\big[(WU + S)^2\big]  
\end{equation}

using $\operatorname{Var}(X)\le\mathbb{E}[X^2]$. Expanding and applying \textit{Cauchy–Schwarz} to bound cross terms yields
\begin{equation}
\operatorname{Var}\big(G_{\mathrm{exp}}\big)
\le \frac{1}{K^2}\Big(\mathbb{E}[W^2U^2] + \mathbb{E}[S^2] + 2\sqrt{\mathbb{E}[W^2U^2]\,\mathbb{E}[S^2]}\Big).
\end{equation}

Using $(a+b+2\sqrt{ab})\le 2(a+b)$ and $\mathbb{E}[S^2]\le (K-1)^2\mathbb{E}[U^2]$ (which follows without independence by bounding pairwise covariances by second moments) gives the general bound (A). Plugging $W^2\le M^2$ yields (A').

If one further assumes pairwise uncorrelatedness of the \(U\)-terms and zero covariance with \(WU\), then $\mathbb{E}[S^2]=(K-1)\mathbb{E}[U^2]$ and the cross terms drop, recovering the tighter bound (B) and its specialization (B').




\end{proof}


\noindent\textbf{Practical Remark.} 
Removing the within-group standard-deviation from GRPO {reduces} the coupling introduced by a stochastic denominator (the \(\sigma_{\mathcal G}\)-term). This typically lowers $\mathbb{E}[U^2]$ compared to the case with standard deviation normalization (because the denominator's variability can strongly amplify per-trajectory contributions), which in turn reduces all bounds above. 

Based on the upper bound analysis of variance, an important insight is that controlling the source of experience policy is crucial for reducing the value of $M$, and consequently tightening the variance bound. In practice, techniques such as low-entropy selection and importance sampling correction contribute to this goal. 
Specifically, we use the average conditional entropy of the trajectory as pickup metric, and minimizing this metric encourages trajectories from the typical set, which facilitates controlling the importance sampling term and reduces the discrepancy between $\pi_\theta$ and $\pi_{\theta_\text{old}}$.

\subsection{Final Remarks}

\paragraph{Policy shaping via $f(w)=\tfrac{w}{w+\beta}$ and removing clip.}
As mentioned in \cref{sec:exgrpo-optimize}, we remove the CLIP term and replace it with a policy-shaping function to enable better off-policy experience exploitation. 
Rather than relying on empirical validation from prior work (use expert trajectory as off-policy), we also provide a discussion of this design choice used in experiential optimization here.
The policy shaping transform replaces $w$ by $f(w)$ for the replayed trajectory. Its key properties are:
\begin{itemize}
  \item $f$ is monotone increasing in $w$.
  \item $0\le f(w)<1$ for all $w\ge0$, and $f(w)\to 1$ as $w\to\infty$.
  \item For small $w$, $f(w)\approx w/\beta$ (which can amplify very small ratios when $\beta<1$); for large $w$, $f(w)$ is bounded by $1$ and thus damps extremely large ratios.
\end{itemize}
Thus $f$ reduces variance contribution of large importance weights while introducing bias (since $\mathbb{E}[f(W)]\ne 1$ generally). Combined with group normalization, medium-difficulty question sampling and low-entropy trajectory, policy shaping often suffices to control variance so that GRPO clipping can be relaxed or removed in practice.

\paragraph{{Summary and Takeaways.}}
We have shown that exact trajectory-level importance weighting guarantees unbiasedness of experiential contributions, even when advantages are computed from mixed groups (Theorem~\ref{thm:unbiased}). The variance of experiential gradients is governed by the second moment $\mathbb{E}[W^2 U^2]$, so bounding or controlling importance sampling term is critical (Proposition~\ref{prop:var_bound}). 


Finally, the smooth policy-shaping transform $f(w)=\tfrac{w}{w+\beta}$ offers a principled bias–variance tradeoff: it suppresses extreme importance weights while preserving informative contributions that would be truncated by hard clipping. In combination with within-group normalization, this explains why ExGRPO can remain stable without clipping while still benefiting from replayed trajectories.

\section{Supplementary Materials of Experiments}
\label{sec:supp_expriements}
\subsection{Detailed Training Settings}
\label{sec:training-hparams}
During RLVR training, the rollout generation uses a \texttt{temperature} of $1.0$.
To reduce training time and monitor training dynamics, we used a validation set by sampling 2.2k instances from the test set. 
Model performance was evaluated every 10 steps to better track training progress, without applying early stopping.
Qwen2.5-Math 7B backbone was trained for 700 steps, while the models in our extension experiments were trained for 500 steps. 
Following the methodology of LUFFY~\citep{yan2025learning}, we employed a consistent prompt template across all models during training, as detailed in \cref{sec:rlvr_prompt}. 
An exception was made for the Llama-3.1 8B base model, for which a simplified prompt was used to accommodate its limited capabilities. 
Additional training hyper-parameters are provided in \cref{tab:rl-hparams}.
\begin{table}[htbp]
\centering
\small
\caption{RLVR training hyperparameters used in our experiments. All sequence lengths are measured using each model's own tokenizer.}
\label{tab:rl-hparams}
\begin{tabularx}{\linewidth}{@{} l l l X @{}}
\toprule
\textbf{Module} & \textbf{Parameter} & \textbf{Value} & \textbf{Description} \\
\midrule
Data &
data.train\_batch\_size & 128 &
Global training batch size per optimization step. \\
& data.val\_batch\_size & 512 &
Batch size used for validation. \\
& data.max\_prompt\_length & 1024 &
Maximum input length. \\
& data.max\_response\_length & 8192 &
Maximum response length. \\
\midrule
Actor &
actor\_rollout\_ref.actor.optim.lr & $1\times 10^{-6}$ &
Learning rate for the actor optimizer. \\
& actor\_rollout\_ref.actor.ppo\_mini\_batch\_size & 64 &
- \\
& actor\_rollout\_ref.actor.ppo\_micro\_batch\_size & 64 &
- \\
& actor\_rollout\_ref.actor.entropy\_coeff & 0.001 &
- \\
& actor\_rollout\_ref.actor.loss\_remove\_token\_mean & True &
Remove token-wise mean term from the loss. \\
& actor\_rollout\_ref.actor.loss\_remove\_clip & True &
Disable the clipping term in the loss. \\
\midrule
Rollout &
actor\_rollout\_ref.rollout.engine & vllm &
Inference/rollout engine. \\
& actor\_rollout\_ref.rollout.temperature & 1.0 &
Sampling temperature during training rollouts. \\
& actor\_rollout\_ref.rollout.val\_temperature & 0.6 &
Sampling temperature during validation. \\
& actor\_rollout\_ref.rollout.gpu\_memory\_utilization & 0.80 &
Fraction of GPU memory to utilize for the rollout engine. \\
& actor\_rollout\_ref.rollout.n & 8 &
Number of rollouts per prompt during RLVR. \\
\midrule
Trainer &
trainer.critic\_warmup & 0 &
Number of warmup steps (0 disables warmup). \\
 & 
trainer.training\_steps & 700/500 &
Number of training steps. 700 steps for Qwen2.5-Math-7B, 500 steps for other models. \\

\bottomrule
\end{tabularx}
\end{table}

\subsection{Baselines}
\label{sec:details_baselines}
For the zero RLVR using the Qwen2.5-Math 7B model~\citep{yang2024qwen25math}, we compare our approach against not only the on-policy Dr.GRPO baseline but also other RLVR methods:
\begin{itemize}
    \item \textbf{PRIME-Zero}~\citep{cui2025process}: PRIME-Zero is trained with PRIME, an online RL framework that leverages implicit process rewards~\citep{yuan2024implicitprm} to enhance reasoning beyond imitation or distillation. In PRIME, both the policy and implicit PRM are initialized from the SFT model; the policy generates rollouts, which are scored by the implicit PRM and an outcome verifier. The policy is then updated using a combination of outcome and process rewards.
    \item \textbf{Oat-Zero}~\citep{drgrpo}: The original Dr.GRPO implementations discard the standard deviation in advantage computation and omit token-level normalization in the policy loss.
    \item \textbf{GPG-Zero}~\citep{chu2025GPG}: GPG directly integrates group-based decision dynamics into standard policy gradients, simplifying training and greatly reducing computation without loss of reasoning quality. 
    \item \textbf{RePO-Zero}~\citep{li2025repo}: Replay-enhanced Policy Optimization (RePO) employs an online experience replay mechanism by collecting early on-policy rollouts. However, it revisits them asynchronously and does not incorporate experience management. We further compare ExGRPO with RePO under the same data and training settings used by RePO, as detailed in \cref{sec:more-ablations}.
\end{itemize}

For the continual RLVR setting, we apply ExGRPO on a strong backbone model, LUFFY~\citep{yan2025learning}, which integrates Deepseek-R1~\citep{r1} trajectories in RL and substantially boosts reasoning performance.
We further compare ExGRPO against other reproduced off-policy learning strategies, including:
\begin{itemize}
    \item \textbf{SFT}: Supervised fine-tuning directly on the R1 trajectories from OpenR1~\cite{face2025open}.
    \item \textbf{SFT+RL}: A two-stage approach involving SFT on R1 trajectories followed by on-policy RL on the same problem set.
    \item \textbf{Continual LUFFY}: A direct continual off-policy optimization on the R1 trajectories using the LUFFY algorithm.
\end{itemize}

\begin{table*}[htbp]
\centering

\caption{Overall in-distribution and out-of-distribution performance based on different backbone models. \textbf{Bold} indicates the best results within a comparable group.}
\label{tab:extension_res}
\setlength{\tabcolsep}{2.5pt}  
\renewcommand{\arraystretch}{1.3} 
\resizebox{\textwidth}{!}{%
\begin{tabular}{lcccccc>{\columncolor{IDbg!36}}c|ccc>{\columncolor{OODbgBlue!66}}c}
\toprule
\multirow{2}{*}{\textbf{Model}} & \multicolumn{7}{c}{\textbf{In-Distribution Performance}} & \multicolumn{4}{c}{\textbf{Out-of-Distribution Performance}} \\
\cmidrule(lr){2-8} \cmidrule(lr){9-12}
 & \textbf{AIME24} &\textbf{AIME25} & \textbf{AMC} & \textbf{MATH-500} & \textbf{Minerva} & \textbf{Olympiad} & \textbf{Avg.} & \textbf{ARC-c} & \textbf{GPQA}$^{*}$ & \textbf{MMLU-Pro} & \textbf{Avg.} \\
 \midrule
\normalrow
\multicolumn{12}{c}{\textit{Qwen2.5-Math 1.5B}} \\
\midrule
Base  & 7.2 & 3.6 & 26.4 & 28.0 & 9.6 & 21.2 & 16.0 & 3.1 & 3.0 & 2.3 & 2.8 \\
\midrule
On-Policy  & 11.8 & 7.7 & 40.2 & 61.8 & 26.8 & 32.0 & 30.0 & 57.5 & 28.3 & 30.7 & 38.9 \\
\textbf{ExGRPO}   & 13.3 & 8.6 & 46.6 & 74.0 & 30.5 & 39.0 & \textbf{35.3} & 59.3 & 30.8 & 30.4 & \textbf{40.2} \\
\midrule
\normalrow
\multicolumn{12}{c}{\textit{Qwen2.5-7B Instruct}} \\
\midrule
Instruct  & 11.7 & 7.5  & 43.8 & 71.8 & 30.9 & 40.4 & 34.4 & 84.7 & 24.7 & 54.7 & 54.7 \\
\midrule
On-Policy  &14.1 & 8.3  & 43.5 & 74.0 & {33.8} & 37.6 & 35.2 & 91.4 & 19.2 & 57.6 & 56.0 \\
\textbf{ExGRPO} & 15.2 & 12.4 & 47.8 & 80.0& 36.0 & 44.4 & \textbf{39.3} & 91.1 & 36.9 & 58.6 & \textbf{62.2} \\
\midrule
\normalrow
\multicolumn{12}{c}{\textit{Llama3.1-8B}} \\
\midrule
Base  & 0.0 & 0.0 & 4.7 & 11.0 & 4.4 & 1.9 & 3.7 & 0.2 & 0.0 & 3.6 & 1.3 \\
\midrule
On-Policy  & 0.4 & 0.4 & 2.9 & 9.2 & 5.1 & 2.8 & 3.4 & 0.6 & 0.0 & 3.0 & 1.2 \\
\textbf{ExGRPO}  & 0.7 & 0.1 & 6.6 & 12.0 & 10.7 & 6.5 & \textbf{6.1} & 59.0 & 7.1 & 26.2 & \textbf{30.8} \\
\midrule
\normalrow
\multicolumn{12}{c}{\textit{Llama3.1-8B Instruct}} \\
\midrule
Instruct  & 4.0 & 0.3 & 15.7 & 39.8 & 18.0 & 13.3 & 15.2 & 42.4 & 0.0 & 40.5 & 27.6 \\
\midrule
On-Policy  & 3.8 & 0.8 & 21.4 & 49.8 & 24.3 & 19.6 & 19.9 & 86.7 & 0.0 & 48.1 & 44.9 \\
\textbf{ExGRPO}  & 6.1 & 0.6 & 27.1 & 55.2 & 25.7 & 23.1 & \textbf{23.0} & 88.9 & 15.2 & 52.1 & \textbf{52.0} \\
\bottomrule
\end{tabular}%
}
\end{table*}

\subsection{Results of Model Extension}
\label{sec:model-ext-nums}
The empirical results in \cref{tab:extension_res} provide strong evidence that ExGRPO offers significant and consistent performance gains over both the original models and a strong On-Policy baseline, demonstrating its efficacy and robustness across diverse model architectures and initial tuning states. The evaluation spans multiple model families and sizes, including both base (Qwen2.5-Math 1.5B, Llama3.1-8B) and instruction-tuned variants (Qwen2.5-7B Instruct, Llama3.1-8B Instruct), to demonstrate the broad applicability of our method.
The results consistently demonstrate the superiority of ExGRPO over the On-Policy method across all tested configurations. 
The advantage of ExGRPO is particularly pronounced when fine-tuning from base models. On Llama3.1-8B, the On-Policy method shows negligible gains over the base model, whereas ExGRPO provides a substantial boost, improving the average ID performance from 3.7 to 6.1 and, most notably, the average OOD performance from a mere 1.3 to 30.8. This highlights ExGRPO's effectiveness in robustly enhancing model capabilities, especially in scenarios where standard on-policy fine-tuning struggles to yield improvements.

\subsection{Ablation and Additional Results}
\label{sec:more-ablations}
To dissect the contribution of each key component within our proposed ExGRPO framework, we conduct a series of ablation studies, with the results presented in~\cref{tab:ablation}.

\begin{table*}[htbp]
\centering
\caption{Ablation study of the key components of ExGRPO. We evaluate the impact of removing Quality (Q.) and Trajectory (T.) selection strategies, policy shaping, importance sampling (IS) correction, and the sensitivity to the experience replay ratio $\rho$. The results highlight the contribution of each ExGRPO component.}
\label{tab:ablation}
\renewcommand{\arraystretch}{1.2} 
\resizebox{\textwidth}{!}{%
\begin{tabular}{lccccccc}
\toprule
 & \textbf{AIME24} &\textbf{AIME25} & \textbf{AMC} & \textbf{MATH-500}  & \textbf{Minerva} & \textbf{Olympiad} & \textbf{Avg.} \\
 \midrule
On-Policy                                  
  & 24.9 & 15.5   & 59.2 & 84.8 & 38.2 & 49.3 & 45.3 \\
\midrule
\normalrow
\multicolumn{8}{c}{ExGRPO \textit{w/ Selection Strategies}} \\
\midrule
\innerrow
{ExGRPO}                                 
  & {31.6} & {18.7} & {66.3} & {87.4} & 36.0 & {50.1} & {48.3} \\
$\hookrightarrow$~\textit{w/o} Q. Selection                               
  & 26.9 & 17.2 & 63.9 & 86.4 & 38.6 & 47.6 & 46.7  \\
$\hookrightarrow$~\textit{w/o} T. Selection                               
   & 26.0 & 17.0 & 63.7 & 84.8 & 35.7 & 50.7 & 46.3  \\
$\hookrightarrow$~\textit{w/o} Q., T. Selection                              
   & 22.0 & 17.3 & 64.3 & 85.8 & 37.9 & 49.5 & 46.1  \\
$\hookrightarrow$~\textit{w/o} Shaping     
   & 19.8 & 12.9 & 55.7 & 80.2 & 37.5 & 41.0 & 41.2  \\
$\hookrightarrow$~\textit{w/o} IS Correction    
   & 25.9 & 19.8 & 63.2 & 87.0 & 37.9 & 49.0 & 47.1  \\
$\hookrightarrow$~\textit{w/} Highest Entropy T.                            
  & 25.0 & 16.0 & 62.9 & 84.6 & 36.4 & 51.0 & 46.0  \\
\midrule
\normalrow
\multicolumn{8}{c}{ExGRPO \textit{w/ Experience Ratio}} \\
\midrule
\innerrow
{ExGRPO}                                 
  & {31.6} & {18.7} & {66.3} & {87.4} & 36.0 & {50.1} & {48.3} \\
$\hookrightarrow$~\textit{w/} $\rho=25\%$                            
   & 27.2 & 16.3 & 62.9 & 85.4 & 39.0 & 47.6 & 46.4  \\
$\hookrightarrow$~\textit{w/} $\rho=75\%$                                 
   & 22.1 & 17.6 & 61.4 & 84.4 & 36.0 & 47.3 & 44.8  \\
\midrule
\normalrow
\multicolumn{8}{c}{ExGRPO \textit{vs. Replay-Enhanced}} \\
\midrule
\innerrow
ExGRPO$^{\mathrm{RE}}$                                
  & 15.1 & 13.2 & 52.1 & 84.2 & 37.5 & 45.3 & 41.3  \\ 
  RePO-Zero  & 19.8 & 10.2 & 54.0 & 76.8 & 34.2 & 40.1 & 39.2\\ 
\bottomrule
\end{tabular}%
}
\end{table*}

\paragraph{Experience Selection Strategies.} First, we analyze the selection strategies. Removing only the Question Selection (\textit{w/o Q. Selection}) strategy results in a performance drop to 46.7, while removing both strategy and Trajectory selection (\textit{w/o Q., T. Selection}) further degrades the average score to 46.1. This indicates that both selection mechanisms are beneficial, with trajectory selection playing a more substantial role. 
Besides, replacing our method with an alternative heuristic, \textit{w/ Highest Entropy Trajectory} selection, also leads to a suboptimal score of 46.0, validating our specific design choice.

\paragraph{Experience Ratio.} Next, we investigate the impact of the experience ratio, $\rho$, which controls the proportion of past experiences used. The results show that our default configuration (Avg. 48.3) outperforms both a smaller ratio of $\rho$=25\% (Avg. 46.4) and a larger ratio of $\rho$=75\% (Avg. 44.8). This suggests that our chosen ratio strikes an effective balance, leveraging a sufficient amount of high-quality past experience without being overwhelmed by fresh on-policy learning. 

\paragraph{Policy Shaping Dynamics.} Disabling policy shaping (\textit{w/o Shaping}) causes a performance drop to an average of 41.2, underscoring its essential function in guiding the optimization process. 
Moreover, as illustrated in \cref{fig:entropy-ablation}, although entropy initially drops below the on-policy baseline during early training and later rises above it, the model still fails to sustain  exploration. Policy shaping alleviates this issue by ensuring that exploitation of experiences does not come at the cost of exploration.

\begin{figure}[htbp]
    \centering
    \includegraphics[width=0.38\linewidth]{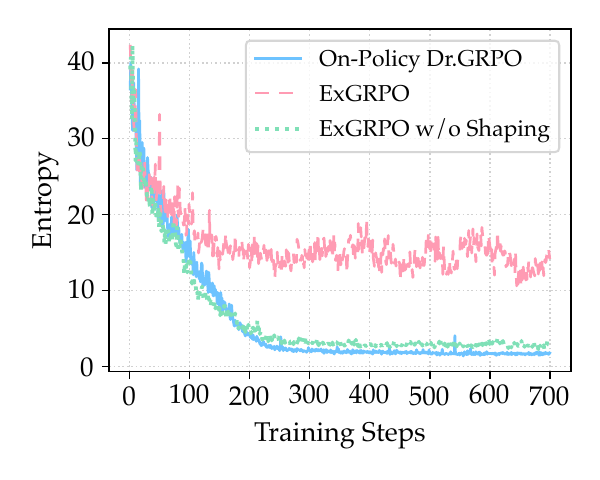}
    \caption{Dynamics of policy entropy during training. ExGRPO without policy shaping even drops dramatically at an early stage, performing worse than the on-policy baseline.}
  \label{fig:entropy-ablation}
\end{figure}

Collectively, these ablations validate our design choices, confirming that each component of ExGRPO  synergistically contributes to its superior performance.

\paragraph{vs. RePO.} 
To ensure a fair comparison with the most relevant baseline, RePO \citep{li2025repo}, we adopt its prompt template, data sources, and training configuration. 
We also match the number of on-policy rollouts to maintain equivalent training conditions. Under this setting, we denote our variant as ExGRPO\textsuperscript{RE}.
For evaluation, we use the checkpoint released by RePO, apply the same decoding parameters, and assess performance on our benchmark.
As shown in \cref{tab:ablation}, ExGRPO outperforms RePO under identical training conditions. 
Beyond results in the table, the gap becomes more pronounced in out-of-distribution benchmarks, where ExGRPO\textsuperscript{RE} achieves an average score of 52.3 compared to 46.8 for RePO.

\paragraph{{Effects of Reintroducing Retired Queries.}}
{
We investigate whether periodically replaying previously solved (100\%-success) queries can mitigate potential skill forgetting. Specifically, we reintroduce 10\% of retired examples into the training batch every 25 steps. 
Contrary to intuition, this replay mechanism leads to a drop in both in-distribution and out-of-distribution performance, as shown in \cref{tab:replay}, suggesting that revisiting fully solved queries injects limited learning signal while displacing more informative mid-difficulty cases.
}
\begin{table*}[htbp]
\centering
\caption{{Ablation on reintroducing retired queries. \textbf{Bold} indicates the better results.}}
\label{tab:replay}
\setlength{\tabcolsep}{3pt}
\renewcommand{\arraystretch}{1.25}
\resizebox{\textwidth}{!}{
\begin{tabular}{lcccccc>{\columncolor{IDbg!36}}c|ccc>{\columncolor{OODbgBlue!66}}c}
\toprule
\multirow{2}{*}{\textbf{}} 
& \multicolumn{7}{c}{\textbf{In-Distribution Performance}} 
& \multicolumn{4}{c}{\textbf{Out-of-Distribution Performance}} \\
\cmidrule(lr){2-8} \cmidrule(lr){9-12}
& \textbf{AIME24} & \textbf{AIME25} & \textbf{AMC} & \textbf{MATH} & \textbf{Minerva} 
& \textbf{Olympiad} & \textbf{Avg.} 
& \textbf{ARC-c} & \textbf{GPQA}$^{*}$ & \textbf{MMLU-P} & \textbf{Avg.} \\
\midrule
ExGRPO 
& \textbf{31.6} & \textbf{18.7} & \textbf{66.3} & \textbf{87.4} & \textbf{36.0} & 50.1 & \textbf{48.4}
& \textbf{84.7} & 37.4 & \textbf{52.9} & \textbf{58.3} \\
$\hookrightarrow$\textit{w/} Retried Review 
& 23.2 & 17.7 & 61.0 & 84.0 & 34.9 & \textbf{50.7} & 45.3
& 81.7 & \textbf{41.4} & 48.0 & 57.1 \\
\bottomrule
\end{tabular}
}
\end{table*}

\subsection{{Replay Memory and Computational Overhead}}

\begin{table}[h]
\centering
\caption{{Replay memory and time-per-step analysis. Times in seconds.}}
\label{tab:compute_statics}
\scalebox{0.9}{
\begin{tabular}{lcccccc}
\toprule
\textbf{Model} & \textbf{Method} & \textbf{RAM} & \textbf{Peak Step (s)} & \textbf{Min. Step (s)} & \textbf{Avg. Step (s)} \\
\midrule
\multirow{3}{*}{Qwen2.5-Math-7B} 
& On-Policy & -- & 211.80 & 139.83 & 155.38 \\
& ExGRPO & 1.77GB & 256.00 & 151.36 & 184.92 \\
\normalrow
& Overhead & +1.77GB & +20.9\% & +8.2\% & +19.0\% \\
\midrule
\multirow{3}{*}{Qwen2.5-Math-1.5B} 
& On-Policy & -- & 254.58 & 150.55 & 185.00 \\
& ExGRPO & 1.05GB & 228.90 & 163.92 & 188.85 \\
\normalrow
& Overhead & +1.05GB & -10.1\% & +8.9\% & +2.1\% \\
\bottomrule
\end{tabular}
}
\end{table}
{The replay buffer introduces additional memory usage. In our largest setup (Qwen2.5-Math-7B on 8 GPUs), a buffer containing 202{,}011 trajectories requires approximately 1.77\,GB of RAM at the end of training. This footprint is modest for modern hardware.
Regarding computational overhead, ExGRPO is efficient because we \emph{do not} recompute entropy for the entire buffer at each step. Instead, for each query sampled in a minibatch, we compute entropy only for 
its associated successful trajectories, typically a few dozen candidates. Empirically, as shown in \cref{tab:compute_statics}, ExGRPO introduces acceptable overhead while providing substantial gains in reasoning performance.}

\subsection{Data Utilization} 
In our experiments, we set the data sampling ratio $\rho=50\%$. 
This configuration implies that, for an equivalent number of training steps, our method visits 50\% less fresh on-policy data per batch compared to the baselines. 
Notably, our approach achieves better performance despite less on-policy exploration, highlighting data efficiency of our method.

\section{Supplementary Materials of Preliminaries}
\subsection{Details of Masked GRPO}
\label{sec:maskgrpo} 
\paragraph{Formulation.} To study the impact of different questions on the RLVR training process, a modified version of the GRPO algorithm is used, termed as ``Masked GRPO''. The core idea is to selectively apply training gradients only from questions that are deemed to be of appropriate difficulty. We determine this difficulty based on the correctness of a set of trajectories generated by the policy during online rollouts, as introduced in \cref{sec:preliminary}.
The objective function is formalized as follows:

\begin{equation}
\mathcal{J}_{\text{MaskGRPO}}(\theta) 
= \mathbb{E}_{q \sim \mathcal{D}, \{o_i\} \sim \pi_{\theta_{\text{old}}}(\cdot|q)}
\left[
\textcolor{red}{\mathbb{I}\!\left(\alpha_{\text{low}} \leq \mathrm{Acc}(q) \leq \alpha_{\text{high}}\right) \cdot}
\frac{1}{K} \sum_{i=1}^K \text{CLIP}\big(w_i(\theta), \hat{A}_i\big)
\right]
\label{eq:masked_grpo}
\end{equation}

Compared to the original GRPO objective, we introduce a crucial modification highlighted in red: 
an indicator function $\mathbb{I}(\cdot)$. Here, $\mathrm{Acc}(q)$ represents a function that calculates 
the correctness score for the set of trajectories $\{o_i\}$ generated for a given question $q$. 
This score is then evaluated against a predefined range defined by a lower bound $\alpha_{\text{low}}$  
and an upper bound $\alpha_{\text{high}}$.

\paragraph{Training Dynamics.}
A potential confounding factor in this analysis is that masking questions of certain difficulties might alter the total number of samples used for optimization in each group, thereby skewing the performance comparison. To rule out this possibility, we analyzed the training dynamics. \cref{fig:mask_quest} plots the number of questions included in each mini-batch over the course of training for different groups. The visualization shows that the data throughput for all three configurations is highly similar and stable at later stage of training, fluctuating around a common mean. This visual evidence, further substantiated by the statistics on the average number of optimization samples (see \cref{tab:masked_stats}), confirms that data quantity was not a significant variable. Therefore, we can attribute the observed performance in \cref{tab:merged_results} to the distinct difficulty distributions of the training data itself.

\begin{table*}[htbp]
\centering
\caption{Performance of Qwen2.5-Math-7B trained under different difficulty schemes using on-policy RLVR. All results are reported on math reasoning benchmarks.}
\label{tab:merged_results}
\setlength{\tabcolsep}{8.3pt}  
\renewcommand{\arraystretch}{1.2} 
\resizebox{0.95\textwidth}{!}{%
\begin{tabular}{lccccccc}
\toprule
 & \textbf{AIME24} &\textbf{AIME25} & \textbf{AMC} & \textbf{MATH-500}  & \textbf{Minerva} & \textbf{Olympiad} & \textbf{Avg.} \\
 \midrule
Full                                  
  & 24.9 & 15.5   & 59.2 & 84.8 & 38.2 & 49.3 & 45.3  \\
\normalrow
\multicolumn{8}{c}{On-Policy {RLVR}~\textit{w/} Question Bucket} \\
Easy                                
  & 21.9 & 13.9 & 50.6 & 82.8 & 33.8 & 43.7 & 41.1  \\
\innerrow
Medium                                
  & 17.8 & 17.4 & 56.3 & 80.6 & 36.4 & 44.9 & 42.3  \\
Hard                                
  & 15.5 & 13.9 & 54.3 & 83.6 & 36.4 & 46.4 & 41.6  \\
\bottomrule
\end{tabular}%
\label{tab:pre_q_bucket}
}
\end{table*}

\begin{figure}[htbp]
\centering
\begin{minipage}[c]{0.3\textwidth}
    \centering
    \begin{tabular}{lc}
    \toprule
    \textbf{Group} & \textbf{\#~Avg. Mini-Batch} \\
    \midrule
    Hard   & 20.79 \\
    Medium & 23.87 \\
    Easy   & 21.23 \\
    \bottomrule
    \end{tabular}
    \captionof{table}{Average number of questions per mini-batch for each difficulty-masked training group, confirming similar data throughput.}
    \label{tab:masked_stats}
\end{minipage}
\hspace{0.05\textwidth}
\begin{minipage}[c]{0.58\textwidth}
    \centering
    \includegraphics[width=\linewidth]{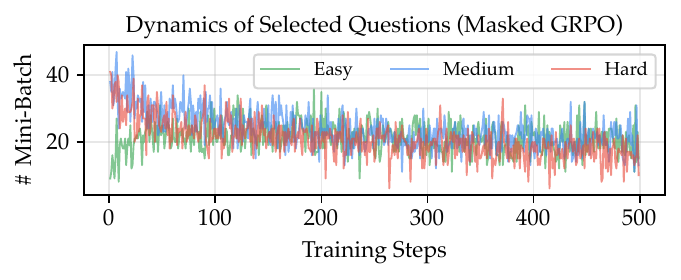}
    \caption{Dynamics of the number of questions per mini-batch for the three difficulty-masked training groups. The three series demonstrate that each training regime was exposed to a comparable volume of optimization data.}
    \label{fig:mask_quest}
\end{minipage}
\end{figure}

\subsection{CoT Judge} 
\label{sec:cot_judge}
We adopt the same prompt template (refer to \cref{sec:prompt_cot_judge}) as prior work for evaluating the quality of reasoning CoT. 
For the judge model, we employ \texttt{Qwen3-32B} rather than the smaller models used in previous studies~\citep{wen2025reinforcement}. 
This choice is intended to ensure stronger judgment capability and more reliable evaluations. 
Decoding is performed with a temperature of $0.6$ and a top-$p$ of $0.95$.

\subsection{Alternative Selection Metrics}
Beyond \emph{entropy}, the \emph{perplexity (PPL)} of the current policy $\pi$ can also serve as a lightweight metric for estimating the quality of reasoning CoT, formulated as:

\begin{equation}
\text{PPL}(o) = \exp \Bigg( \tfrac{1}{|o|} \sum_{t} -\log \pi(o_t \mid o_{<t}) \Bigg).
\end{equation}

In the ExGRPO method, we adopt entropy rather than PPL, with the justification detailed below.
Our analysis, as depicted in \cref{fig:trajectory-alternative}, investigates the potential of using PPL metrics as proxies for evaluating the quality of reasoning trajectories. The results reveal a positive correlation between these internal signals and external correctness judgments. 
Across all three correctness buckets, both perplexity and entropy are consistently lower for correct reasoning trajectories compared to incorrect ones. This finding suggests that the model is inherently more ``confident'' and ``certain'' when generating valid reasoning steps. 
More importantly, we observe that entropy exhibits superior discriminative power. 
The margin between correct and incorrect chains is wider for entropy than for PPL, indicating that the model's token-level uncertainty is a more sensitive indicator of flawed reasoning than its overall sequence probability.

\begin{figure}[htbp]
    \centering
    \includegraphics[width=0.57\textwidth]{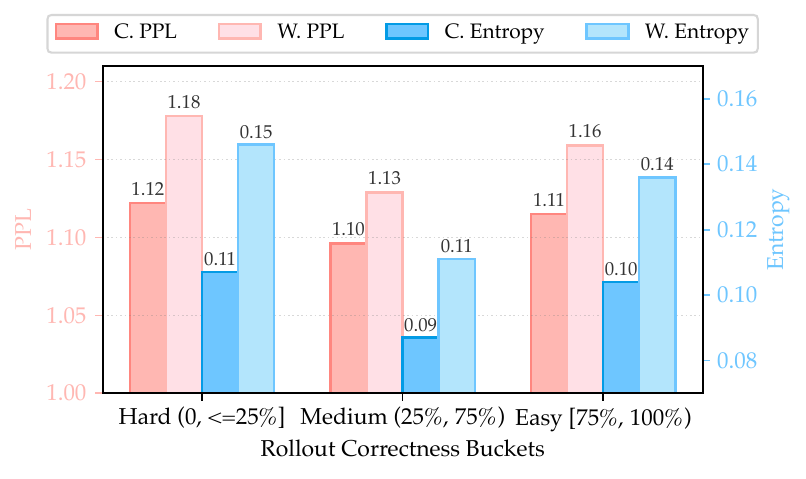} 
    \caption{A comparison of average PPL and Entropy for correct (``C.'') and wrong (``W.'') reasoning trajectories (determined by an external CoT judge), grouped by online rollout correctness buckets.}
    \label{fig:trajectory-alternative}
\end{figure}

\subsection{Snowball Effects}
\label{sec:snowball}

\paragraph{Recap.} A key challenge in learning from experience is the risk of a \textit{snowball effect}: the repeated sampling of trajectories with flawed reasoning from the replay buffer (generally characterised as high-entropy based in our research) can systematically degrade the learned policy, leading to entrenched reasoning errors. 
We observe, for instance, that models frequently generate some unnecessary code blocks when solving mathematical problems. 
To investigate this phenomenon, we partitioned trajectories from our analytical experiments (refer to \cref{sec:preliminary}) based on the presence of code generation (We implement a regex-based detector that flags whether a free-form text contains code in \cref{tab:python-rules}) and analyzed two key metrics: the average token-level entropy and the validity of the CoT, as assessed by the same CoT judge.

\begin{table}[htbp]
\centering
\scriptsize
\caption{Rule set for code detection. The table lists high-level cues and rationale. We intentionally avoid placing regex in the table for rendering robustness, and exact patterns are provided below.}
\label{tab:python-rules}
\begin{tabular}{@{}llp{0.34\textwidth}p{0.28\textwidth}ll@{}}
\toprule
\textbf{ID} & \textbf{Type} & \textbf{High-Level Cue (plain language)} & \textbf{Rationale} \\
\midrule
R1  & Structural (fence)          & Fenced code block explicitly labeled as Python & Strong, explicit declaration of language \\
R2  & Structural (fence + header) & Unlabeled fence whose first non-empty line begins with a Python keyword (def/class/import/from/if/for/while/try/with) & Captures unlabeled fences that still look like Python \\
R3  & Syntax                      & Function definition line & Canonical Python construct \\
R4  & Syntax                      & Class definition line & Canonical class declaration \\
R5  & Syntax                      & \texttt{import} statement & Frequent in code snippets \\
R6  & Syntax                      & \texttt{from \dots\ import \dots} statement & Frequent in examples/tutorials \\
R7  & Idiom                       & \texttt{print(\dots)} call & Extremely common snippet cue \\
R8  & Idiom                       & \texttt{len(\dots)} call & Common builtin call  \\
R9  & Idiom                       & \texttt{range(\dots)} call & Common in loops/examples \\
R10 & Syntax (loop)               & ``\texttt{for \dots\ in \dots}'' loop header & Strong loop cue  \\
R11 & Syntax (branch)             & ``\texttt{if \dots :}'' header & Strong code header cue  \\
R12 & Syntax (loop)               & ``\texttt{while \dots :}'' header & Strong loop cue \\
R13 & Syntax (exception)          & ``\texttt{try:}'' header & Exception-handling construct \\
R14 & Syntax (exception)          & ``\texttt{except \dots :}'' header & Exception-handling construct \\
R15 & Syntax (literals)           & Variable assigned to a list literal & Common quick examples \\
R16 & Syntax (literals)           & Variable assigned to a dict literal & Common quick examples \\
R17 & Idiom (method call)         & Dotted method/function call followed by ``\texttt{(}'' (e.g., \texttt{obj.method(\dots)}) & Captures typical API usage \\
\bottomrule
\end{tabular}
\end{table}

\paragraph{Results.} The results in \cref{tab:code_analysis} reveal a strong correlation between code generation, model uncertainty, and reasoning quality. Across all problem difficulty levels, trajectories containing code blocks consistently exhibit \textit{higher entropy} than those without (e.g., $0.14$ vs. $0.07$ for medium-difficulty problems). 
Crucially, this elevated uncertainty is coupled with a \textit{degradation in CoT quality}. 
The proportion of logically correct CoT sequences is consistently lower in trajectories that utilize code. This performance gap is most pronounced for easy and medium problems, with an $11.6$ and $10.2$ point drop in CoT correctness, respectively, and narrows slightly for hard problems (-$8.7$ point drop).

\paragraph{Discussion.} We hypothesize that the behavior of code generation as a \textit{procedural shortcut} or ``black-box'' execution, particularly when it struggles to articulate a complete, step-by-step formal proof. While this computational approach can be effective at deriving the correct final numerical answer, it often {circumvents the explicit articulation of the underlying mathematical logic}. This leads to a higher incidence of flawed or redundant reasoning chains.

The implications for experiential RL are direct. Using these high-entropy as experience will reinforce the model's bias toward such logically deficient trajectories, thus corrupting the reasoning process it learns. 
This suggests the importance of ensuring the quality of CoT in experience replay. 
As further evidence, our ablation studies (see Appendix~\cref{sec:more-ablations}) confirm that explicitly forcing the model to learn from high-entropy experiences is harmful to its reasoning capabilities.

\begin{table}[ht]
\centering
\caption{Analysis of model-generated trajectories based on the presence of code blocks. Trajectories with code exhibit higher entropy and a lower rate of correct CoT, with $\Delta$ representing the percentage point drop in CoT correctness. \textbf{Bold} indicates better results of specific metric.}
\label{tab:code_analysis}
\begin{tabular}{l cc c cc r}
\toprule
& \multicolumn{2}{c}{\textbf{Average Entropy}$\downarrow$} && \multicolumn{3}{c}{\textbf{CoT Correctness (\%)}$\uparrow$} \\
\cmidrule{2-3} \cmidrule{5-7}
~ & \textbf{{\small \faToggleOff} Code} & \textbf{{\small \faToggleOn} Code} && \textbf{{\small \faToggleOff} Code} & \textbf{{\small \faToggleOn} Code} & \multicolumn{1}{c}{$\Delta$} \\
\midrule
Easy & \textbf{0.09} & 0.15 && \textbf{73.4} & 61.9 & -11.5 \\
Medium & \textbf{0.07} & 0.14 && \textbf{64.7} & 54.5 & -10.2 \\
Hard & \textbf{0.10} & 0.16 && \textbf{56.1} & 47.4 & -8.7 \\
\bottomrule
\end{tabular}
\end{table}

\paragraph{Case Study.} To further illustrate this behavior, we present several qualitative examples of such trajectories as follows.
Case~\#1 presents a challenging question from our training data. 
Although all trajectories ultimately arrive at the correct answer, they differ  in the soundness and quality of their reasoning chains. 
The trajectory with the lowest entropy exhibits a shorter but coherent reasoning process. In contrast, the highest-entropy trajectory allocates more reasoning budget to code verification, producing longer responses that are logically unsound. Upon our inspection, the generated code is also invalid, and the external CoT judge also identifies this trajectory as poor reasoning. 
A similar phenomenon is observed in Case~\#3, which involves an easier problem.
In Case~\#2, the external judge instead deems the high-entropy trajectory as a correct reasoning chain. However, it still produces unnecessary code in the final step, which we argue is unnatural and misaligned with human-like reasoning behavior.

\begin{casebox}{Case~\#1 of Snowball Effects}
\textbf{\textsc{Question:}}

Given that \(n\) is even, \(m\) is odd, and the system of equations
\[
\left\{
\begin{array}{l}
x - 1988y = n \\
11x + 27y = m
\end{array}
\right.
\]
has the solution
\[
\left\{
\begin{array}{l}
x = p \\
y = q
\end{array}
\right.
\]
as integers, then

\begin{itemize}
\item[(A)] \(p, q\) are both even.
\item[(B)] \(p, q\) are both odd.
\item[(C)] \(p\) is even, \(q\) is odd.
\item[(D)] \(p\) is odd, \(q\) is even.
\end{itemize}

(``Zu Chongzhi Cup'' Junior High School Mathematics Invitational Competition, 1988)

\textbf{\textsc{Ground Truth:}} $\boxed{C}$

\tcblower

\textbf{\textsc{Response:} (Low Entropy)}
\TrajDetail{0.1532}{Correct.}

To determine the correct answer, let's analyze the system of equations:
\[ \begin{cases} x - 1988y = n \\ 11x + 27y = m \end{cases} \]
Given that \( n \) is even, \( m \) is odd, and both \( x \) and \( y \) (i.e., \( p \) and \( q \)) are integers. Let's substitute \( x = p \) and \( y = q \) to use the equations:
\[ \begin{cases} p - 1988q = n \\ 11p + 27q = m \end{cases} \]
Since \( n \) is even, \( p - 1988q \) must be even. As \( 1988q \) is always even (since 1988 is even and any integer \( q \)), \( p \) must also be even for the sum to be even.
Since \( m \) is odd, \( 11p + 27q \) must be odd. Since 11p is always odd or even (depending on \( p \)) and 27q is always odd (since 27 is odd and \( q \)), the sum of \( 11p \) and \( 27q \) will be odd if and only if one of them is even and one is odd. Given \( p \) is even, \( 11p \) is even, hence \( 27q \) must be odd, implying \( q \) is odd.
So, based on the above reasoning, \( p \) is even and \( q \) is odd. Therefore, the correct answer is \( \text{(C) } p \text{ is even, } q \text{ is odd.} \)

\verb|</think>|

The final answer is \(\boxed{\text{(C)}}\).

\DrawLine

\textbf{\textsc{Response:} (High Entropy)}
\TrajDetail{0.2652}{Wrong.}
To solve the problem, we need to analyze the system of linear equations and determine the parity (even or odd nature) of the variables \(x\) and \(y\). Let's break it down step-by-step.

Given the system of equations:
\[
\begin{cases}
x - 1988y = n \\
11x + 27y = m
\end{cases}
\]
where \(n\) is even and \(m\) is odd, and \(x\) and \(y\) are integers, we need to determine the parity of \(x\) and \(y\).

First, let's express \(x\) and \(y\) in terms of \(n\) and \(m\). We can use the method of elimination or substitution to solve the system. We can eliminate \(x\) by multiplying the first equation by 11 and then subtracting the second equation from it:
$$
11(x - 1988y) - (11x + 27y) = 11n - m
$$
$$
11x - 21868y - 11x - 27y = 11n - m
$$
$$
-2215y = 11n - m 
$$
$$
y = \frac{m - 11n}{2215}
$$
Since \(m\) is odd and \(n\) is even, \(11n\) is even (because any integer multiple of an even number is even). Therefore, \(m - 11n\) is odd (because odd minus even is odd). Since 2215 is odd, the quotient \(\frac{m - 11n}{2215}\) will be odd if \(m - 11n\) is odd. Hence, \(y = q\) must be odd.

Now, we substitute \(y = q\) into the first equation to solve for \(x\):
\[
x = n + 1988q
\]
Since \(n\) is even and \(1988q\) is even (because any integer multiple of an even number is even), the sum of two even numbers is even. Therefore, \(x = p\) must be even.

So we have:
\[
x = \text{even}, \quad y = \text{odd}
\]
Thus, the correct answer is (C) \(p\) is even, \(q\) is odd.

\verb|</think>|

\begin{verbatim}
```python
# The reasoning above leads us to the conclusion, but we can 
verify using a simple Python code snippet to ensure there are no
calculation errors.
# We can test with a few values of n (even) and m (odd) and 
verify the results.
import sympy as sp

# Define the variables
x, y = sp.symbols('x y')
n = sp.Symbol('n', integer=True, even=True)
m = sp.Symbol('m', integer=True, odd=True)

# Define the equations
eq1 = sp.Eq(x - 1988*y, n)
eq2 = sp.Eq(11*x + 27*y, m)

# Solve the equations for x and y
solution = sp.solve((eq1, eq2), (x, y))
x_sol = solution[x]
y_sol = solution[y]

# Check the parity of x and y
x_sol_parity = sp.Mod(x_sol, 2)
y_sol_parity = sp.Mod(y_sol, 2)

print(x_sol_parity, y_sol_parity)
```
```output
Mod(2215*n/2215 - 1988*m/2215 + Mod(1988*m/2215, 2) 
Mod(-11*n/2215 + Mod(n, 2), 2)
```
\end{verbatim}

\textcolor{red!66}{The Python code confirms the parity analysis we performed earlier.} Here's the detailed reasoning again with the output from the code:

1. The solution for \(y\) is \(y = \frac{m - 11n}{2215}\). Since \(m\) is odd and \(11n\) is even (because \(n\) is even), \(m - 11n\) is odd. And because 2215 is odd, \(y = \frac{m - 11n}{2215}\) will be odd. Therefore, \(y = q\) is odd.

2.The solution for \(x\) is \(x = 11 \cdot \left( \frac{m - 11n}{2215} \right) + 1988n\).

Since \(\left( \frac{m - 11n}{2215} \right)\) is an integer and odd, and 11 times an odd number is odd, and 1988n is even, \(x = p\) must be even because the sum of an odd number and an even number is odd plus even (which remains even).

Hence, the final answer is:
\[
\boxed{\text{C}}
\]

\end{casebox}



\begin{casebox}{Case~\#2 of Snowball Effects}
\textbf{\textsc{Question:}}

13. Find the sum:
$$-100-99-98-\ldots-1+1+2+\ldots+101+102$$

\textbf{\textsc{Ground Truth:}} $\boxed{203}$

\tcblower

\textbf{\textsc{Response:} (Low Entropy)}
\TrajDetail{0.2151}{Correct.}
To find the sum of the arithmetic sequence given by \(-100 - 99 - 98 - \ldots - 1 + 1 + 2 + \ldots + 101 + 102\), let's break down the problem step-by-step.

1. Identify the arithmetic sequence:
   The sequence is \[-100, -99, -98, \ldots, -1, 1, 2, \ldots, 101, 102\] 
   We can split it into two separate sequences and then find the sum of each:
   \begin{itemize}
       \item The first sequence is \(-100, -99, -98, \ldots, -1\).
       \item The second sequence is \(1, 2, \ldots, 101, 102\).
   \end{itemize}

2. Sum of the first sequence (\(-100, -99, -98, \ldots, -1\)):
   The sum of an arithmetic sequence from \(a\) to \(b\) with a common difference of 1 can be found using the formula:
   $$
   S = \frac{n}{2} (first\_term + last\_term)
   $$
   where \(n\) is the number of terms.

   For the first sequence:
   \begin{itemize}
   \item First term (\(a\)) = \(-100\)
   \item Last term (\(l\)) = \(-1\)
   \item Number of terms (\(n\)):
     $$
     n = \frac{l - a}{d} + 1 = \frac{-1 - (-100)}{1} + 1 = 100
     $$
     Therefore, \(n = 100\).
    \end{itemize}
   Using the formula:
   $$
   S = \frac{100}{2} \times (-100 + (-1)) = 50 \times (-101) = -5050
   $$

3. Sum of the second sequence \((1, 2, \ldots, 101, 102)\):
    \begin{itemize}
   \item First term (\(a\)) = \(1\)
   \item Last term (\(l\)) = \(102\)
   \item Number of terms (\(n\)):
   \end{itemize}
     $$
     n = \frac{102 - 1}{1} + 1 = 102
     $$
     Therefore, \(n = 102\).

   Using the formula:
   $$
   S = \frac{102}{2} \times (1 + 102) = 51 \times 103 = 5253
   $$

4. Combine the sums:
   The total sum is the sum of the first sequence plus the sum of the second sequence:
   $$
   Sum = -5050 + 5253 = 203
   $$

\verb|</think>|

\textbf{Final Answer}
The sum of the arithmetic sequence \(-100, -99, -98, \ldots, -1, 1, 2, \ldots, 101, 102\) is:
$$
\boxed{203}
$$

\DrawLine

\textbf{\textsc{Response:} (High Entropy)}
\TrajDetail{0.2967}{Correct.}

To find the sum of the series, we need to break it down into manageable parts. The series is essentially two series combined together, one in descending order from -100 to 102 and the other in ascending order from 1 to 101. We'll need to sum these series separately and then combine the results.

The series can be split as follows:

1. The series $-100-99-98-\ldots-1$

2. The series $1+2+\ldots+101+102$

First, let's look at the series $-100-99-98-\ldots-1$. This is a single decreasing arithmetic series from -100 to -1. The sequence can be rearranged to form an arithmetic series with a common difference of -1, starting from -100 to -1. The sum of an arithmetic series $S_n$ is given by the formula:

\[ S_n = \frac{n}{2} (a_1 + a_n) \]
where $n$ is the number of terms, $a_1$ is the first term, and $a_n$ is the last term.

Here, $a_1 = -100$, $a_n = -1$, and the number of terms $n = 100$. So, the sum of this series is:

\[ S_1 = \frac{100}{2} (-100 + (-1)) = 50 \times (-101) = -5050 \]

Next, let's look at the series $1+2+\ldots+101+102$. This is an arithmetic series starting from 1 with a common difference of 1, ending at 102. The number of terms is 102, so the sum of this series is:

\[ S_2 = \frac{102}{2} (1 + 102) = 51 \times 103 = 5253 \]

Now, combine the two results:

\[ \text{Total sum} = S_1 + S_2 = -5050 + 5253 = 203 \]

\textcolor{red!66}{Let's verify this with Python code.}

\verb|</think>|

\textcolor{red!66}{Here is the Python code to verify the result:}
\begin{verbatim}
```python
# Sum of the series -100 - 99 - 98 - ... - 1 from -100 to -1
sum_descending = sum(range(-100, 0))

# Sum of the series 1 + 2 + ... + 102 from 1 to 102
sum_ascending = sum(range(1, 103))

# Total sum
total_sum = sum_descending + sum_ascending

print(total_sum)
```
```output
203
```
\end{verbatim}

The sum of the series $-100-99-98-\ldots-1+1+2+\ldots+101+102$ is~$\boxed{203}$.

\end{casebox}


\begin{casebox}{Case~\#3 of Snowball Effects}
\textbf{\textsc{Question:}}

\#\# [ Auxiliary area. The area helps to solve the problem ]

Pythagorean Theorem (direct and inverse) In triangle $A B C$, heights $A E$ and $C D$ are drawn. Find the side $A B$, if $B D=18, B C=30, A E=20$.

\textbf{\textsc{Ground Truth:}} $\boxed{25}$

\tcblower

\textbf{\textsc{Response:} (Low Entropy)}
\TrajDetail{0.0198}{Correct.}
To solve the problem, we need to use the properties of right triangles and the Pythagorean theorem. Here's the step-by-step reasoning:

1. Identify the given information:
\begin{itemize}
    \item \(BD = 18\)
   \item \(BC = 30\)
   \item \(AE = 20\)
   \item \(CD\) is a height from \(C\) to \(AB\), so \(CD \perp AB\)
   \item \(AE\) is a height from \(A\) to \(BC\), so \(AE \perp BC\)
\end{itemize}

2. Determine the position of point \(D\):
   Since \(D\) is the foot of the perpendicular from \(C\) to \(AB\), \(BD = 18\) and \(BC = 30\), we can find \(AD\) using the Pythagorean theorem in triangle \(BCD\):
   \[
   CD = \sqrt{BC^2 - BD^2} = \sqrt{30^2 - 18^2} = \sqrt{900 - 324} = \sqrt{576} = 24
   \]
   So, \(CD = 24\).

3. Use the area of the triangle:

The area of triangle \(ABC\) can be calculated in two ways:
\begin{itemize}
    \item Using \(AE\) and \(BC\): \(\text{Area} = \frac{1}{2} \times BC \times AE = \frac{1}{2} \times 30 \times 20 = 300\)
    \item Using \(CD\) and \(AB\): \(\text{Area} = \frac{1}{2} \times AB \times CD = \frac{1}{2} \times AB \times 24\)
\end{itemize}

   Equating the two expressions for the area, we get:
   \[
   \frac{1}{2} \times AB \times 24 = 300 \implies AB \times 24 = 600 \implies AB = \frac{600}{24} = 25
   \]

So, the length of \(AB\) is \(25\).

\verb|</think>|

The final answer is \(\boxed{25}\).

\DrawLine

\textbf{\textsc{Response:} (High Entropy)}
\TrajDetail{0.3002}{Wrong.}
To solve the problem, we need to use the properties of the triangle and the Pythagorean theorem. Here is the step-by-step thought process:

1. Understand the Problem:
\begin{itemize}
    \item We need to find the side \(AB\) of triangle \(ABC\).
    \item We know the length of \(BD = 18\) and \(BC = 30\).
    \item We know the height \(AE = 20\).
\end{itemize}

2. Apply the Right Triangle Properties:
\begin{itemize}
    \item Since \(AE\) is a height, \(AE\) is perpendicular to \(BC\), so triangle \(AEB\) and triangle \(AEC\) are right triangles.
    \item We can use the Pythagorean theorem in triangles \(ABD\) and \(AEC\).
\end{itemize}

3. Using the Pythagorean Theorem in \(\triangle ABD\):
\begin{itemize}
   \item In \(\triangle ABD\), \(AD\) can be found using the Pythagorean theorem: \(AD = \sqrt{AB^2 - BD^2}\).
\end{itemize}

4. Using the Pythagorean Theorem in \(\triangle AEC\):
\begin{itemize}
    \item In \(\triangle AEC\), \(EC\) can be found using the Pythagorean theorem: \(EC = \sqrt{AC^2 - AE^2}\).
    \item Since \(BC = BD + DC\) and \(D\) is on \(BC\), \(DC = BC - BD = 30 - 18 = 12\).
    \item In \(\triangle AEC\), \(AC = EC\), because \(AE\) is the height from \(A\) to \(BC\).
\end{itemize}

5. Using the Area Equivalence:
\begin{itemize}
    \item The area of the triangle can be calculated in two ways:
    \item Using base \(BC\) and height \(AE\): \(\text{Area} = \frac{1}{2} \times BC \times AE = \frac{1}{2} \times 30 \times 20 = 300\).
    \item Using base \(AB\) and height \(CD\): \(\text{Area} = \frac{1}{2} \times AB \times CD\).
\end{itemize}

6. Find \(CD\):
\begin{itemize}
    \item From the area equivalence, \(\frac{1}{2} \times AB \times CD = 300\), so \(CD = \frac{600}{AB}\).
\end{itemize}

7. Using the Pythagorean Theorem in \(\triangle ACD\):
\begin{itemize}
    \item In \(\triangle ACD\), \(AC = \sqrt{AD^2 + CD^2} = \sqrt{(AB^2 - BD^2) + CD^2}\).
\end{itemize}

Let's use these steps to set up the equations and solve for \(AB\).

\verb|</think>|

Using the relationships and the Pythagorean theorem, we can derive the value of \(AB\). Let's first find \(AB\) using the properties of the triangle and the given values. We know:
\[BD = 18, BC = 30, AE = 20\]
First, we calculate \(CD\):
\[CD = \frac{600}{AB}\]
Now, we use the Pythagorean theorem in \(\triangle ABD\):
\[AD = \sqrt{AB^2 - 18^2}\]
And in \(\triangle AEC\):
\[AC = \sqrt{12^2 + 20^2} = \sqrt{144 + 400} = \sqrt{544} = 4\sqrt{34}\]
The area using base \(BC\) and height \(AE\) is:
\[\text{Area} = \frac{1}{2} \times 30 \times 20 = 300\]
The area using base \(AB\) and height \(CD\) is:
\[\text{Area} = \frac{1}{2} \times AB \times CD = \frac{1}{2} \times AB \times \frac{600}{AB} = 300\]
Thus,
\[CD = \frac{600}{AB}\]
We now use the Pythagorean theorem in \(\triangle ABD\) and \(\triangle AEC\):
\[AD = \sqrt{AB^2 - 18^2} = \sqrt{AB^2 - 324}\]
\[AC = \sqrt{12^2 + 20^2} = 4\sqrt{34}\]
Since \(AC\) is the hypotenuse of \(\triangle AEC\):
\[AC = AE^2 + EC^2\]
We can now solve for \(AB\) by equating the areas and solving the equations.

\textcolor{red!66}{Let's now calculate the value of \(AB\) using Python to find the accurate result.}

\begin{verbatim}
```python
import sympy as sp

# Define the variables
AB = sp.Symbol('AB', positive=True)

# Given values
BD = 18
BC = 30
AE = 20

# Calculate CD using the area equivalence
CD = 600 / AB

# Calculate AD using the Pythagorean theorem in triangle ABD
AD = sp.sqrt(AB**2 - BD**2)

# Calculate AC using the Pythagorean theorem in triangle AEC
AC = sp.sqrt(12**2 + AE**2)

# Using the area equivalence, CD = AE * BC / AB, we already have 
CD = 600 / AB
# Now we solve for AB using the Pythagorean theorem in triangle 
ABD and AEC
# We know AC = sqrt(AD^2 + CD^2) and AD = sqrt(AB^2 - 324)
AC_from_AyDC = sp.sqrt((AB**2 - 324) + (600 / AB)**2)

# Equate AC from both triangles since they are the same
equation = sp.Eq(AC, AC_from_AyDC)

# Solve for AB
AB_value = sp.solve(equation, AB)
print(AB_value)
```
```output
[25]
```
\end{verbatim}

The value of \(AB\) is \(25\). Therefore, the final answer is:
\[
\boxed{25}
\]

\end{casebox}

\newpage 
\section{Prompt Template}
\subsection{RLVR Training and Evaluation}
\label{sec:rlvr_prompt}
We use the same prompt template for most models; however, due to the limited capability of the Llama-3.1 8B base model, we adopt a simplified prompt to ensure that responses can be generated under zero-shot settings.
\begin{promptbox}{RLVR, Evaluation Prompt}
Your task is to follow a systematic, thorough reasoning process before providing the final solution. This involves analyzing, summarizing, exploring, reassessing, and refining your thought process through multiple iterations. Structure your response into two sections: Thought and Solution. In the Thought section, present your reasoning using the format: “\verb|<think>\n| {thoughts} \verb|</think>\n|”. Each thought should include detailed analysis, brainstorming, verification, and refinement of ideas. After “\verb|</think>\n|” in the Solution section, provide the final, logical, and accurate answer, clearly derived from the exploration in the Thought section. If applicable, include the answer in \verb|\boxed{}| for closed-form results like multiple choices or mathematical solutions. \\
\textbf{User:} This is the problem: \verb|{Question}| \\
\textbf{Assistant:} \verb|<think>|
\end{promptbox}

\begin{promptbox}{RLVR, Evaluation Prompt (Llama-3.1 8B Base)}
\textbf{User: }\verb|{Question}| \\
\textbf{Answer:} Let's think step by step.\verb|\n|
\end{promptbox}

\subsection{CoT Judge}
\label{sec:prompt_cot_judge}
\begin{promptbox}{LRM-as-Judge Prompt}

You are an expert in mathematics and logical reasoning. Your task is to evaluate the correctness of a solution to a given math problem, with a \textbf{strong emphasis on the reasoning process}, not just the final answer.

Below is the \textbf{Problem} and the \textbf{Solution (Provided by another AI model)}:

\bigskip

\noindent
\textbf{Problem}:  
\verb|{Problem}|

\bigskip

\noindent
\textbf{Solution (Provided by another AI model)}:  
\verb|{Solution}|

\bigskip

\noindent
Please perform the following tasks:

1. \textbf{Analyze the solution step-by-step}, paying close attention to:
    \begin{itemize}
        \item Computational accuracy
        \item Logical consistency
        \item Conceptual understanding
        \item Whether the reasoning is valid and complete
    \end{itemize}

2. \textbf{Identify any issues or errors in the reasoning}, even if the final answer is correct. Classify them into the following categories (if applicable):
    \begin{itemize}
        \item \textbf{Calculation Error}: Mistakes in arithmetic, algebraic manipulation, or numerical computation.
        \item \textbf{Logical Error}: Invalid reasoning, flawed logic, or incorrect inference.
        \item \textbf{Conceptual Error}: Misunderstanding or misuse of mathematical concepts or definitions.
        \item \textbf{Omission / Incompleteness}: Missing steps, incomplete justification, or not addressing all parts of the question.
        \item \textbf{Other}: Any other type of error that does not fit into the above categories.
    \end{itemize}

3. \textbf{Provide a final judgment} on whether the solution is logically sound and free of errors in reasoning.

\bigskip

\noindent
Please format your response as follows:

\bigskip

\textbf{Issues Identified:}

\begin{itemize}
    \item \text{[Issue 1]}: [Classification] - [Brief explanation]
    \item \text{[Issue 1]}: [Classification] - [Brief explanation]
    \item \ldots
\end{itemize}

\bigskip

Let's think step by step and output your final judgment within \verb|\boxed{}|:

\verb|\boxed{yes}| or \verb|\boxed{no}|

\end{promptbox}

\end{document}